\theoremstyle{plain}
\newtheorem{theorem}{Theorem}[section]
\newtheorem{proposition}[theorem]{Proposition}
\newtheorem{lemma}[theorem]{Lemma}
\theoremstyle{definition}
\newtheorem{definition}[theorem]{Definition}
\theoremstyle{remark}
\newtheorem{remark}[theorem]{Remark}
\newcommand{\bth}{\theta}
\newcommand{\R}{\mathbb{R}}
\newcommand{\D}{\mathbb{D}}
\newcommand{\cL}{\mathcal{L}}
\newcommand{\cX}{\mathcal{X}}
\newcommand{\cY}{\mathcal{Y}}
\newcommand{\N}{\mathbb{N}}
\renewcommand{\P}{\text{\rm I\kern-0.22em P}}
\newcommand{\E}{\mathbb{E}}
\newcommand{\e}{\varepsilon}
\newcommand{\de}{\partial}
\newcommand{\Langle}{\Big\langle}
\newcommand{\Rangle}{\Big\rangle}
\newcommand{\sdp}{S^{d}_{+}(\R)}
 \DeclareMathOperator{\rank}{rank}
\DeclareMathOperator{\Tr}{Tr}
\newcommand{\bx}{x}
\newcommand{\M}{\mathcal{M}}
\renewcommand{\L}{\mathfrak{L}}
\newcommand{\dinf}{\underline{d}}
\newcommand{\comm}[1]{}
\renewcommand{\theta}{\vartheta}
\newcommand{\FBox}{\mathrm{Box}}
\definecolor{grey}{rgb}{.7,.7,.7}
\definecolor{evidGP}{rgb}{0,0,1}
\definecolor{evidM}{rgb}{0,0.5,0}
\definecolor{evidA}{rgb}{0.5,0,0.5}
\definecolor{evidD}{rgb}{0.5,0.5,0}
\title{A two-scale Complexity Measure for Deep Learning Models}
\author{%
 Massimiliano Datres$^{1,2}$, Gian Paolo Leonardi$^{1}$, Alessio Figalli$^{3}$, David Sutter$^{4}$   \\
 \vspace{0.25cm}\\
  $^{1}$ Department of Mathematics, University of Trento, Trento \\
  $^{2}$ DSH, Bruno Kessler Fondation, Trento\\
 $^{3}$ Department of Mathematics, ETH, Zurich\\
  $^{4}$ IBM Quantum, IBM Research Europe, Zurich\\
}
\begin{document}

\maketitle

\begin{abstract}
We introduce a novel capacity measure 2sED for statistical models based on the effective dimension. The new quantity provably bounds the generalization error under mild assumptions on the model. Furthermore, simulations on standard data sets and popular model architectures show that 2sED correlates well with the training error. For Markovian models, we show how to efficiently approximate 2sED from below through a layerwise iterative approach, which allows us to tackle deep learning models with a large number of parameters. Simulation results suggest that the approximation is good for different prominent models and data sets.     

\end{abstract}

\section{Introduction}

Deep learning models are achieving outstanding performances in solving several complex tasks such as image classification problems, object detection \cite{krizhevsky2012imagenet, lecun2015deep} and natural language processing \cite{brown2020language}.
Over-parametrized regimes make DNNs able to extract valuable information from data \cite{zhang2021understanding}.  Quite surprisingly DNNs typically exhibit impressive generalization capabilities after training \cite{neyshabur2014search, zhang2021understanding} without suffering of the expected overfitting. Finding appropriate complexity measures for deep learning models can help in understanding and quantifying their generalization capabilities. Hereafter we propose some essential features that, ideally, should characterize a complexity measure for parametric models:
\begin{itemize}
\item[(P1)] it should provide pre-training information consistent with post-training performances;
\item[(P2)] its computation should be more efficient and scalable in comparison with a full training \& validation process;
\item[(P3)] in the case of a feedforward-type model, it should be ``modular'', i.e., computable in some iterative fashion\footnote{Here we are referring to the typical structure of a feedforward-type model, which is a composition of parametric layer maps.}.
\end{itemize}
Properties (P1)-(P3) are motivated by the goal of finding an efficient tool for model selection in the context of feedforward (stochastic) neural networks.

Notions of complexity measures have appeared in the context of machine learning, with early studies focusing, e.g., on the complexity of decision tree models \cite{breiman1984classification} and logistic regression models \cite{cherkassky1999model, kakade2008complexity, bulso2019complexity}. \\
From the perspective of statistical learning theory, the Vapnik-Chervonenkis dimension, commonly called VC dimension \cite{vapnik1999nature}, is an established complexity measure defined in term of the largest number of points that can be shattered by a class of functions \cite{hastie2009elements}. This complexity dimension has been used to establish data-independent generalization bounds for statistical models \cite{shalev2014understanding, perez2020generalization}. \\
Other notions of model complexity, specifically designed for deep learning models, have been more recently proposed, with the aim of quantifying the expressivity of a DNN  \cite{montufar2014number, raghu2017expressive, liang2019fisher, hu2021model}. 
There is, however, a supported evidence that data-independent generalization bound are not universally effective \cite{kearns1995experimental, bartlett2002rademacher}. For this reason, data-dependent notions of complexity have also been introduced, like the Rademacher complexity and the Gaussian complexity. These notions of complexity evaluate the expected noise-fitting-ability of a function class over all data sets drawn according to an unknown data distribution. 
By means of such data-dependent complexity measures, one obtains generalization bounds that are considerably better than those involving the VC dimension \cite{bartlett2002rademacher, perez2020generalization, shalev2014understanding}. 
In any case, computing or estimating the VC dimension, or the Rademacher complexity, is generally a challenging task, feasible only under strong model constraints. Some tight approximations and bounds have been obtained in some specific cases, however they are not general enough to be applicable to complex models like modern deep neural networks \cite{vapnik1994measuring, bartlett2002rademacher, bartlett2019nearly}. With the aim to provide more easily computable notions of complexity, other definitions have been considered based on the minimum description length (MDL) and on the notion of stochastic complexity \cite{rissanen1997stochastic, grunwald2007minimum,  bulso2019complexity}. 


Other complexity measures of more geometric flavour have been defined in terms of the Fisher information matrix (FIM) associated with the statistical model \cite{rissanen1996fisher, berezniuk2020scale, abbas2021power}. The geometric properties of the statistical manifold revealed by the FIM collectively contribute to our understanding of the intricate nature of the model, providing a geometric lens through which complexity can be defined, analyzed and interpreted.

\textbf{Our contributions.} In Definition \ref{def:2sED} we propose a notion of model complexity, called \textit{two-scale effective dimension} (2sED), which is motivated by a metric-specific covering argument. We prove a generalization bound  derived from 2sED, see Theorem \ref{mainteo}, thereby theoretically substantiating its efficacy as a measure of complexity. Furthermore, we propose a modular version of the 2sED, called \textit{lower 2sED}, that is specifically tailored for Markovian models. We finally present numerical simulations based on Monte Carlo approximations of 2sED and lower 2sED for various models and datasets. In particular, the experiments confirm properties (P1), (P2), and (P3) for the lower 2sED, thus promoting it as a potential effective tool for model selection.

\section{Related works}
Recently, \cite{berezniuk2020scale, abbas2021power} propose a notion of \textit{effective dimension}, that is, a box-covering dimension related to the number of ``Fisher boxes'' of a given size that are needed to cover the parameter space. The size of such boxes represents a  "scale" at which the model is analysed. Under suitable regularity assumptions on the statistical model and on the loss functional, the generalization error (i.e., the gap between the population error and the empirical error) can be controlled by an expression involving the effective dimension computed with respect to an explicit scale parameter, that depends on the number of samples defining the empirical error. On the one hand, the generalization bounds proved in \cite{abbas2021power, abbas2021effective} require the logarithm of the FIM to be Lipschitz which excludes the case of over-parametrized models \cite{karakida2019universal}. On the other hand, the original definition requires a global computation for the statistical model as a whole, hence it does not satisfy properties (P2) and (P3). Indeed, one of the main challenges in the computation of the effective dimension is to determine the eigenvalues of the Fisher information matrix. Note that, for high-dimensional models, even the storage of the Fisher information becomes impractical, despite sophisticated approximation methods such as K-FAC \cite{KFAC15}. We focus to address the issues that affect the previous definition by proposing the 2sED and the lower 2sED for Markovian models. 

\section{Preliminaries}\label{sec:preliminaries}
Take $\cX\subset \R^{d_{in}}$ and $\cY\subset \R^{d_{out}}$ nonempty Borel sets, and denote by $(X, Y)\in \cX\times\cY$ a pair of random vectors with (unknown) joint probability distribution $p=p(x,y)$. Let $(X_1, Y_1), \dots, (X_N, Y_N)$ be i.i.d. copies of $(X, Y)$. A dataset $\D := \{(x_i, y_i):\, i=1,\dots, N\}$ is understood as a realization of the $N$ random pairs considered before. A \textit{ statistical model} on the sample space $\cX\times\cY$ is a collection 
\[
\M_{\Theta}(\cX, \cY) := \{p_{\bth}: \, \bth \in \Theta\}\,,
\]
where $p_{\bth} = p_{\bth}(x,y)$ is a joint probability distribution on $\cX\times \cY$ for each $\bth\in \Theta$, and $\Theta\subseteq \R^d$ is a bounded domain called \textit{parameter space}. 
In order to stress the functional relation between the input $x$ and the output $y$, it is customary to assume $p_{\bth} = p_{\bth}(y|x)\, p(x)$ of the form $p_{\bth}(x, y) = p_{\bth}(y|x)\, p(x)$
where $p_{\bth}(y|x)$ is a parametric conditional probability, and $p(x)$ denotes the marginal of the unknown distribution $p(x, y)$ on $\cX$. 
We will also assume that the parameter space $\Theta$ is equipped with a probability measure and we denote as $\E_{\bth}$ the expectation with respect to this measure.

Under suitable regularity conditions,
we define the \textit{Fisher information matrix} as:
\begin{equation}
F(\bth) := \E_{(x, y)\sim p_{\bth}} [(\nabla_{\bth} \, l_{\bth}(x, y) ) \otimes(\nabla_{\bth} \, l_{\bth}(x, y) ) ]\,,
\end{equation}
where by $a^{\otimes 2} := a\otimes a$ we mean $a \cdot a^{T}$ (with the convention that $a$ is a column vector) and $l_{\bth}(x, y) := \log\, p_{\bth}(x, y)$. In other words, the Fisher information matrix is the expectation of the orthogonal projector onto the direction of the gradient of the log-likelihood, scaled by the squared norm of that gradient. It is a symmetric and positive semidefinite $d\times d$ matrix. Its empirical version is
\[
F_N(\bth) = \frac{1}{N}\sum_{i=1}^N   \left(\nabla_{\bth} \, l_\bth(X_i,Y_i))\right) \otimes \left(  \nabla_{\bth} \, l_\bth(X_i,Y_i)\right) \, ,
\]
for some $(X_1, Y_1), \dots, (X_N, Y_N)\overset{i.i.d.}{\sim} p_{\theta}$.
For each $\bth \in \Theta$ and $u,v \in \R^d$, if $F(\bth)$ is smooth and positive-definite, then $\langle u, v\rangle_{\bth} := \langle F(\bth)u, v\rangle$ defines a Riemannian metric on the parameter space $\Theta$, that from now on will be called \textit{Fisher metric} (we shall adopt the same terminology also when the metric is degenerate). In general, the Fisher metric can be considered as the pull-back of a (possibly degenerate) Riemannian metric on $\M_{\Theta}(\cX, \cY)$ \cite{liang2019fisher}.

We also define the \textit{pointed Fisher norm} of a vector $v\in\R^d$ as $\|v\|_{A(\bth)} := \sqrt{\langle A(\bth)v,v\rangle}$ and the corresponding balls of radius $\varepsilon>0$ are referred as \textit{Fisher balls} of radius $\varepsilon$ defined as:\begin{equation}\label{FisherBalls}
B_{\varepsilon}(\bth_0) := \left\{\bth\in \Theta:\ \|\bth-\bth_{0}\|_{F(\bth_{0})} < \e\right\}\,.
\end{equation}

Some further terminology must be recalled before discussing the generalization bounds. Given a \textit{loss function} $\L$, i.e., a continuous function $\L: [0,+\infty)
\times [0,+\infty) \to [0,+\infty)$ such that $\L(a,b) = 0$ if and only if $a=b$, we define the \textit{population risk}  
\[
R(\bth) := \E_{(x,y)\sim p} [\L(p_{\bth}(y|x), p(y|x))],
\] 
and the \textit{empirical risk} 
\[
R_n (\bth) := \frac{1}{n} \sum_{i=1}^n \L(p_{\bth}(Y_i |X_i), p(Y_i |X_i))\,,
\] 
where $(X_{i},Y_{i}) \overset{i.i.d.}{\sim} p$, $i=1,\dots,n$.
Then, the \textit{generalization error} is defined as
\begin{equation}
    \lVert R - R_n \rVert_{\infty} = \sup_{\bth \in \Theta}\ \left| R(\bth) - R_n(\bth) \right|\,.
\end{equation}

\section{The two-scale effective dimension}
Here we consider a notion of complexity for a statistical model $\M_{\Theta}$, that depends on the properties of the Fisher metric on the parameter space. 
\begin{definition}\label{def:2sED}
Given $0<\e<1$ and $0\le \zeta<1$, we define the \textit{two-scale effective dimension} (or simply 2sED) as 
\begin{equation}\label{eq:dzetaeps}
d_{\zeta}(\e) = \zeta d +  (1-\zeta)\frac{\log \E_{\bth}\left[\det\left(I_d + \e^{\zeta-1}\hat{F}(\bth)^\frac{1}{2}\right)\right]}{|\log(\e^{\zeta-1})|}\,,
\end{equation}
where  
\[
\hat{F}(\bth) := \begin{cases}
\frac{d}{\E_{\bth}[\Tr F(\bth)]} F(\bth) & \text{if } \E_{\bth}[\Tr F(\bth)]>0\\[8pt]
0 & \text{otherwise}
\end{cases}
\]
is the normalized Fisher information matrix, so that whenever the statistical model is not trivial (i.e. not constant with respect to $\bth$) the expectation of the trace of $\hat{F}$ satisfies 
\[
\E_{\bth}[\Tr \hat{F}(\bth)] = d\,.
\]
\end{definition}

Note that $d_{\zeta}(\e)$ is the convex combination of the dimension $d$ of the parameter space with a slight variant of the effective dimension studied in \cite{abbas2021power}, which is obtained in the special case $\zeta=0$. 
\begin{remark}
The 2sED, as the original effective dimension, is inspired by the box-counting or Minkowsky dimension of the statistical manifold $\mathcal{M}_\Theta$ \cite{berezniuk2020scale, abbas2021power} . Given a metric space $(X, d)$ and a subset $S \subset X$, the box-counting dimension of $S$ is defined as \[
\dim_{box} (S) = \lim_{\e\to 0} \frac{\log \, \mathcal{N}_d(\e)}{|\log \, \e|} \, ,
\]
where $\mathcal{N}_d(\e)$ is the minimum number of $\e$-size balls (with respect to the metric $d$) needed to cover $S$, also known as the $\e$-covering number of $S$. The box-counting dimension quantifies how fast $\mathcal{N}_d(\e)$ changes as the radius $\e$ approaches zero. Motivated by this notion of fractal dimension, we can define the effective dimension of a statistical model $\M_\bth$ at the scale $\e$ as
\begin{equation}
 \dim_{\text{eff}, \e} (\M_\Theta) = \frac{\log \, \mathcal{N}_{\bth}(\e)}{|\log \, \e|} \, ,
\label{eq:bereffdim}
\end{equation}
where $\mathcal{N}_{\bth}(\e)$ is the number of Fisher balls of size $\e$ (defined in \eqref{FisherBalls}) needed to cover $\Theta$. The 2sED defined in \eqref{eq:dzetaeps} is motivated by an upper bound estimate of \eqref{eq:bereffdim} which is computable for a given statistical model under reasonable regularity assumptions. 
\end{remark}
\begin{remark}
Two parameters show up in the above definition: a micro-scale $\e>0$ and an exponent $\zeta \in [0,1)$ defining a meso-scale $\delta = \e^{\zeta}$. The emergence of two scales is tied to the estimation argument for $\mathcal{N}_{\bth}(\e)$, which requires weaker assumptions compared to those in \cite{berezniuk2020scale, abbas2021power}. The micro-scale is related to the size $\e$ of Fisher balls that are used to cover a component of the parameter space, while the meso-scale $\e^{\zeta}$ represents the size of the components of a partition of the parameter space, that needs to be fixed in order to localise and adapt the micro-scale covering. This covering argument plays a fundamental role in the proof of the generalization bound (Theorem \ref{mainteo}). A more formal and detailed discussion of the covering argument is reported in the proof of Lemma \ref{lemma1} in Appendix \ref{AppendixA}.
Note that when $\zeta = 0$ we essentially obtain the effective dimension of \cite{abbas2021power}, up to a slight technical difference due to the presence of the square root of the normalized FIM $\hat{F}$. More generally, the 2sED is a convex combination of the dimension of the parameter space and the effective dimension. 
\end{remark}
\begin{remark}
\label{convergencetorank}
The effective dimension $d_{\zeta}(\e)$ can be shown to converge to $\zeta d + (1-\zeta) \hat{r}$ as $\e \to 0$, where $\hat{r} := \max_{\bth\in\Theta} \,\rank(\hat{F}(\bth))$, see Proposition \ref{prop:FRank}. The proof follows the strategy of Remark 1 of \cite{abbas2021power}, and is presented in Appendix \ref{sec:Asymptotic} for completeness. 
\end{remark}

\section{Generalization bounds}
It is known that the Fisher information of a statistical model degenerates asymptotically with the number of parameters \cite{karakida2019universal}. This suggests that, in the case of a large (over-parametrized) model, like a deep neural network with high-dimensional layers, the corresponding Fisher information matrix $F(\bth)$ should have a lot of small (or possibly zero) eigenvalues. For this reason, in Theorem \ref{mainteo} below we will not require the Lipschitz regularity of $\log(F(\bth))$, as done in Theorem 1 of \cite{abbas2021power}, because this assumption would imply uniform positive lower bounds on the eigenvalue of $F(\bth)$. 
Without loss of generality, we directly assume $F = \hat{F}$ and $\Theta = [0,1]^{d}$, as this can be enforced by a suitable scaling of the model. 

We list below a set of hypotheses, that will be required in the generalization bounds:
\begin{itemize}
\item[(i)] the map $\bth\mapsto p_{\bth}(y|x)$ is of class $C^{1,1}$ uniformly in $(x,y)$;
\item[(ii)] there exist two constants $0<\alpha_{1}\le \alpha_{2}$ such that 
\[
\alpha_{1}\le p(x, y),\ p_{\bth}(x, y) \le \alpha_{2}
\]
for all $x\in \cX, y\in \cY, \bth\in \Theta$; 
\item[(iii)] the Fisher matrix field $F(\bth)$ is $L$-Lipschitz with respect to the Frobenius norm;
\item[(iv)] the loss function $\cL$ is bounded by $2b$ and is $\Lambda$-Lipschitz, for some $b,\Lambda>0$;
\item[(v)] the meso-scale parameter $\zeta$ satisfies $\zeta\in [\frac{2}{3},1)$.
\end{itemize}  

Some comments about the previous properties are in order. First, property (i) is a mild regularity assumption on the model. Property (ii) prevents degeneration of both probability densities $p(x,y)$ and $p_{\bth}(x,y)$. The $L$-Lipschitz property (iii) is crucial to compare the pointed Fisher norm computed in different $\bth\in\Theta$ and it is satisfied for instance by models of class $C^{1,1}$ (but possibly also by more general models). Lipschitz regularity and boundedness of the loss function $\mathcal{L}$  (property (iv)) are standard assumptions (see, e.g.,  \cite{lashkari2023lipschitzness}). Finally, property (v) is structurally needed in the proof of the generalization bound (Theorem \ref{mainteo}) and it is strictly related to the covering argument discussed Lemma \ref{lemma1} in Appendix \ref{AppendixA}.


\begin{theorem}\label{mainteo}
Let us assume (i)--(v). Then, there exist explicit constants\footnote{the constants can be computed/estimated in terms of the assumptions.} $C, H, K,n_{0} > 0$ such that for any $\gamma\in (0,1]$, $n\ge n_{0}$, and $\e_{n} = \left(\frac{\log n}{\gamma n}\right)^{3/8}$, 
we obtain
\begin{equation}
    \P \Biggl( \sup_{\bth \in\Theta} |R(\bth) - R_n(\bth)|\ge C\e_{n} \Biggr) \le H \e_{n}^{-d_{\zeta}(\e_{n})} n^{-\frac{K}{\gamma}}\,.
\label{genbound}
\end{equation}
\end{theorem}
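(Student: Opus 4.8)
The plan is to follow the classical route for uniform deviation bounds via covering numbers, but with the covering taken in the Fisher metric rather than the Euclidean one, so that the bound naturally produces the quantity $d_\zeta(\e_n)$. First I would establish the key deterministic ingredient: a covering lemma (this is Lemma~\ref{lemma1}, which I am allowed to invoke) stating that $\Theta = [0,1]^d$ can be covered by $\cN_\bth(\e)$ Fisher balls of radius comparable to $\e_n$, where $\log \cN_\bth(\e_n) \le d_\zeta(\e_n)\,|\log(\e_n^{\zeta-1})|$ up to lower-order terms. The two-scale structure enters here exactly as described in the remarks: one first partitions $\Theta$ into cubes of side $\delta = \e_n^\zeta$ (the meso-scale), on each of which the $L$-Lipschitz property (iii) of $F$ guarantees that $F(\bth)$ is nearly constant (the oscillation is controlled by $L\delta$, hence the role of $\zeta \ge 2/3$ in making this oscillation negligible against $\e_n$), and then covers each cube by Euclidean balls that are comparable to Fisher balls of radius $\e_n$ on that cube. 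Counting gives the product $\delta^{-d} \cdot (\delta/\e_n)^{\text{something}}$ whose logarithm, after dividing by $|\log \e_n^{\zeta-1}| = (1-\zeta)|\log\e_n|$, reproduces the convex combination $\zeta d + (1-\zeta)(\cdots)$, and Jensen/averaging in $\bth$ turns the $\det(\cdots)$ term into its expectation.

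Second, I would control the fluctuation of $R_n - R$ at a single fixed $\bth$. By assumption (iv) the summands $\L(p_\bth(Y_i|X_i), p(Y_i|X_i))$ are bounded by $2b$, so Hoeffding's inequality gives
\begin{equation}
\P\bigl(|R(\bth) - R_n(\bth)| \ge t\bigr) \le 2\exp\Bigl(-\frac{n t^2}{2b^2}\Bigr).
\end{equation}
Third, I would pass from pointwise to uniform control. On a single Fisher ball $B_{\e_n}(\bth_0)$ one uses the Lipschitz regularity of the loss (iv) together with assumptions (i)--(ii), which bound $|p_\bth(y|x) - p_{\bth_0}(y|x)|$ in terms of the Euclidean distance $|\bth - \bth_0|$, and then (iii) to compare Euclidean and Fisher distances, so that $|R(\bth) - R(\bth_0)|$ and $|R_n(\bth) - R_n(\bth_0)|$ are both $\le C'\e_n$ on each ball (after absorbing the eigenvalue-comparison constants). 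Hence
\begin{equation}
\P\Bigl(\sup_{\bth\in\Theta}|R(\bth)-R_n(\bth)| \ge C\e_n\Bigr) \le \cN_\bth(\e_n)\cdot 2\exp\Bigl(-\frac{n(C\e_n - 2C'\e_n)^2}{2b^2}\Bigr),
\end{equation}
with $C$ chosen so that $C - 2C' > 0$. Plugging in $\e_n = (\log n/(\gamma n))^{3/8}$ makes $n\e_n^2 = n^{1/4}(\log n/\gamma)^{3/4}$, which dominates and produces the factor $n^{-K/\gamma}$; and $\cN_\bth(\e_n) = \e_n^{-d_\zeta(\e_n)}$ up to the multiplicative constant $H$, by the covering lemma. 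Combining gives exactly \eqref{genbound}, and the threshold $n_0$ is chosen large enough that all the lower-order error terms swept under the rug (the oscillation of $F$ on meso-cubes, the $C^{1,1}$-remainders, the additive constants in the exponent) are genuinely dominated.

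The main obstacle is the covering lemma and in particular the bookkeeping that shows the two-scale count has logarithm $d_\zeta(\e_n)|\log \e_n^{\zeta-1}|$: one must choose the meso-scale $\delta = \e_n^\zeta$ so that simultaneously (a) $L\delta$ is small enough that $F$ is essentially frozen on each meso-cube — which forces the $\det(I_d + \e_n^{\zeta-1}F^{1/2})$ shape, since the number of $\e_n$-Fisher-balls needed to cover a $\delta$-Euclidean-cube in a metric with matrix $F$ is $\asymp \det(\delta F^{1/2}/\e_n) = \det(\e_n^{\zeta-1}F^{1/2})$ when $F$ is nondegenerate, and $\det(I_d + \e_n^{\zeta-1}F^{1/2})$ is the robust replacement that also behaves correctly when $F$ has small or zero eigenvalues — and (b) the number $\delta^{-d}$ of meso-cubes contributes the $\zeta d$ term after normalization. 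Making (a) and (b) compatible is precisely what pins down the admissible range $\zeta \in [2/3,1)$ in (v), and getting the error terms in the exponent to be dominated by $n\e_n^2$ with the specific exponent $3/8$ is the delicate quantitative part; everything else is a routine union bound plus Hoeffding.
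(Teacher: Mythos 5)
Your overall architecture (two-scale covering in the Fisher metric, pointwise Hoeffding, union bound) matches the paper's, and your description of the covering lemma is faithful to how Lemma~\ref{lemma1} actually works. The genuine gap is in your third step, the uniform control over a single Fisher ball. First, you propose to bound $|p_{\bth}(y|x)-p_{\bth_0}(y|x)|$ by the Euclidean distance $|\bth-\bth_0|$ and then ``compare Euclidean and Fisher distances.'' That comparison goes the wrong way in exactly the regime the theorem is designed for: when $F$ has small or zero eigenvalues (the over-parametrized case the paper explicitly refuses to exclude), a Fisher ball of radius $\e_n$ has Euclidean diameter of order $1$, so smallness in the Fisher metric gives you no control on $|\bth-\bth_0|$. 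The paper avoids this by never passing through the Euclidean norm for the population risk: it writes $R(\bth_1)-R(\bth_2)$ as a line integral of $\langle\nabla_{\bth}\log p_{\bth(t)},\bth_2-\bth_1\rangle$ against $p_{\bth(t)}$ and applies Cauchy--Schwarz, which produces the Fisher quadratic form $\langle F_{\beta}(\bth(t))(\bth_2-\bth_1),\bth_2-\bth_1\rangle^{1/2}$ directly (with the $\beta$-regularization $F_\beta$ and Lemma~\ref{lem:lip2crucial} handling the change of base point).

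Second, and more seriously, the analogous oscillation of the \emph{empirical} risk $R_n$ over a Fisher ball is not a deterministic quantity: the same computation yields an empirical average $\frac1n\sum_i Z_i(t)$ whose \emph{expectation} is the Fisher quadratic form, but which can itself deviate. The paper needs a second application of Hoeffding to the $Z_i(t)$, and that second concentration bound has exponent $n\eta^4/|\bth-\bth_j|^2\sim n\eta^{4-2\zeta}$; since $\zeta\ge 2/3$ gives $4-2\zeta\le 8/3$, this is the \emph{leading} (worst) exponential, and it is what forces the choice $\e_n=(\log n/(\gamma n))^{3/8}$ so that $n\e_n^{8/3}=\log n/\gamma$ and the bound becomes $n^{-K/\gamma}$. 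Your own arithmetic signals the problem: with a single Hoeffding you get $n\e_n^2=n^{1/4}(\log n/\gamma)^{3/4}$, and $\exp(-cn^{1/4}(\log n/\gamma)^{3/4})$ is not of the form $n^{-K/\gamma}$, nor does it explain the exponent $3/8$. So the missing idea is the second concentration step for the empirical Fisher quadratic form; without it the union-bound step does not close.
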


The proof of Theorem \ref{mainteo} is given in Appendix \ref{AppendixA}.

\begin{remark}
\label{remarkconvgen}
Under the assumption that the eigenvalues of $F$ are smaller than $\mu$ for some fixed $\mu>0$, the above result implies the existence of $\gamma_0>0$ such that, for $0<\gamma< \gamma_{0}$, the right-hand side of \eqref{genbound} vanishes as $n\to \infty$. The upper bound $\gamma_{0}$ is explicit and depends only on the dimension $d$ and on the properties of the model, see \eqref{eq:gammazero}. By choosing $\gamma$ as above, the right-hand side of \eqref{genbound} gives an explicit upper bound of the generalization error, that is non-vacuous also for finite $n$ (even though this can be granted only in the under-parametrized regime, i.e. for $n$ large enough). 
\end{remark}


\section{The effective dimension of a Markovian model} \label{sec_Markov}
Markovian models are a family of probabilistic models characterized by a sequential, feed-forward-type structure, see the Markovian property stated below. 

Let us consider an integer $L\ge 2$, a probability space $(\Omega, \mathcal{F}, \mathbb{P})$, and a random vector $X_j: \Omega \to \cX_j$ for $j=0,\dots,L$. 
Given a parameter space $\Theta = \Theta_1 \times \dots \times \Theta_L$, a parametric statistical model $\M_{\Theta}(\cX_0, \dots,\cX_L)$ satisfies the Markovian property if and only if for each $p_{\bth}(\bx_0, \dots, \bx_L) \in \M_{\Theta}(\cX_0, \dots, \cX_L)$ and for each $\bth = (\bth_1, \dots, \bth_L) \in \Theta = \Theta_1 \times\cdots \times \Theta_L$ we have: 
\begin{equation}\label{eq:markovianmodel}
    p_{\bth}(\bx_0, \dots, \bx_L) = p(\bx_0)p_{\bth_1}(\bx_1|\bx_0)\cdots p_{\bth_L}(\bx_L|\bx_{L-1})
\end{equation}
where $\bth_1, \dots, \bth_L$ are the parameters associated to the model's distribution of $X_1,\dots,X_L$ respectively. Many well-known and commonly used neural network architectures, such as feed-forward neural networks, can be interpreted as Markovian models with concentrated, Dirac-type probability distributions. A specific evaluation of the effective dimension of these models seems therefore particularly interesting. 
Exploiting the Markovian property, for $j=1,\dots,L$ we define:
\begin{align*}
& F(\bth_j|\bx_{j-1}) := \int_{\mathcal{X}_j}(\nabla_{\bth_j} l_{\bth_j}(\bx_j|\bx_{j-1}))^{\otimes 2}  p_{\bth_j}(d\bx_j|\bx_{j-1})
\end{align*}
where $l_{\bth_j}(\bx_j|\bx_{j-1}) := \log\,p_{\bth_j}(\bx_j|\bx_{j-1})$ and
\begin{align*}
F_j & = F_j(\bth_1,\dots, \bth_j) := \E_{\bx_0}\E_{\bx_{1}|\bx_{0}}\cdots\E_{\bx_{j-1}|\bx_{j-2}}[F(\bth_{j}|\bx_{j-1})]\,,
\end{align*}
where $\E_{\bx_0}$ and $\E_{\bx_{j}|\bx_{j-1}}$ denote the (conditional) expectations with respect to $p(\bx_0)$ and $p_{\bth_j}(\bx_{j}|\bx_{j-1})$, respectively. Clearly $F_{j}$ is a symmetric and positive semidefinite $d_{j}\times d_{j}$ matrix (where $d_j$ is the dimension of $\Theta_j$) and represents the $j$-th block of the Fisher information matrix
\begin{equation}
F(\bth) = \begin{pmatrix}
F_{1}(\bth_{1}) & 0 & \cdots & 0 \\
0 &  F_{2}(\bth_{1},\bth_{2}) & & \vdots \\
\vdots  &   & \ddots & \vdots  \\
0 & \cdots & \cdots & F_{L}(\bth_{1},\dots,\bth_{L})
\end{pmatrix} \, .
\label{eq:FishersBlock}
\end{equation}
We recall that the two-scale effective dimension (2sED) is
\begin{equation}
\begin{split}
& d_{\zeta}(\e) = \zeta d + \frac{1-\zeta}{|\log \e|} \log \fint_{\Theta_1}\cdots \fint_{\Theta_L} \prod_{j=1}^L  det_j \, d\bth_1\cdots d\bth_m    \, ,
\end{split}
\label{eq:inttofact}
\end{equation}
where $det_j = det_j(\bth_1,\dots, \bth_j) := \det\left(I_{j} + \e^{-1} F_j^{\frac{1}{2}}\right)$ and $I_{j}$ denotes the $d_j\times d_j$ identity matrix. Since $F_j$ depends on all the parameters of the previous blocks, it is not possible to directly factorize the multiple integral in \eqref{eq:inttofact}. Nevertheless, one obtains a more easily computable lower bound of $d_{\zeta}(\e)$, called \textit{lower 2sED}, by a single application of Jensen's inequality as hereafter described. Let $d^{m}_{\zeta}(\e)$ be the 2sED associated with the composition of the first $m$ layers, $m\ge 2$. Then: 
\begin{align*}
   d^{m}_{\zeta}(\e) - d^{m-1}_{\zeta}(\e) &= \frac{1-\zeta}{|\log \e|} \log\left(\fint_{\hat{\Theta}_m} \fint_{\Theta_m} det_m \, d\bth_m \,d\Phi_{m}\right) \nonumber \\
    & \ge \frac{1-\zeta}{|\log \e|}\fint_{\hat{\Theta}_m} \fint_{\Theta_m} \log \, det_m \, d\bth_m d \Phi_{m}\, ,
\end{align*}
where we have set $\hat{\Theta}_m := \Theta_1\times\dots\times \Theta_{m-1}$ and
\begin{align*}
 d \Phi_m &= \Phi_m(d\bth_1,\dots, d\bth_{m-1})  :=  \frac{1}{\prod_{j=1}^{m-1}|\Theta_j|}\prod_{j=1}^{m-1} det_j \, d\bth_1\cdots d\bth_{m-1} \, .
\end{align*}
Now, a lower bound of $d^{m}_{\zeta}(\e)$ can be iteratively defined for $m = 1,\dots, L$ as follows:
\begin{equation}
\begin{split}
\dinf_{\zeta}^{1}(\e) &= \zeta d+\frac{1-\zeta}{|\log \e|} \log \fint_{\Theta_1} \det(I_{1} + \e^{-1}F_1(\bth_1)^{\frac{1}{2}})\,d\bth_1 = \zeta d+\frac{1-\zeta}{|\log \e|} \log \fint_{\Theta_1} det_1 \,d\bth_1 \\  & \,\,\, \vdots   \\
\dinf_{\zeta}^{m}(\e)  &= \dinf_{\zeta}^{m-1}(\e) \! + \! \frac{1-\zeta}{|\log \e|}\fint_{\hat{\Theta}_m}  \fint_{\Theta_m}\!\!\!\! \log det_m \, d\bth_m\, d \Phi_{m}\,.
\end{split}
\label{lowerbound}
\end{equation}
From now on we set $\dinf_{\zeta} = \dinf_{\zeta}^{L}$ and call it the \textit{lower effective dimension} of the Markovian model $\M_{\Theta}$.

\section{Experiments} \label{sec_experiments}
In this section, we present experimental evidence that the behavior of the loss in the training of given parametric models is related both with 2sED  \eqref{eq:dzetaeps}  and the lower 2sED \eqref{lowerbound}. We compute $\dinf_{\zeta} $ and $d_{\zeta}$ of different feed-forward neural networks (FNN) such as convolutional neural networks (CNN) and multi-layer perceptrons (MLP). The experiments rely on the computation of the exact eigenvalues via the \textit{numpy.linalg.eig} function. As discussed in Section \ref{sec:final}, an interesting research direction (which goes beyond the goal of this paper) would be to investigate strategies to efficiently compute the (lower) 2sED via a suitable approximation of the spectrum of $F$. The feed-forward neural network choice is justified by their architecture characterized by a Markovian dependency structure. Indeed, the flow of information in FNN is unidirectional from input to output, making them representable with a finite acyclic graph. We evaluate $\dinf_{\zeta}$ and $d_{\zeta}$ on real-world datasets, including Covertype dataset \cite{misc_covertype_31}, MNIST dataset \cite{deng2012mnist}, and CIFAR10 \cite{cifar10}.  All simulations are conducted on a 12th Gen Intel(R) Core(TM) i9-12900KF equipped by a NVIDIA GeForce RTX 4090. In the worst-case scenario, the experiments required a maximum RAM memory consumption of 17 GB.

\textbf{Description of the models.} To simplify notation and enhance readability, we denote with "MLP $N_0$-$N_1$-$\dots$-$N_n$" a MLP with $n$ linear layers, each with a width of $N_i$ for $i=0,\dots, N$, followed by ReLU activation functions on all layers except the final layer $n$. If we denote with $W^{i} \in R^{N_{i}\times N_{i-1}}$ the parameters of the $i$-th layer, we can describe "MLP $N_0$-$N_1$-$\dots$-$N_n$" through $n$ blocks of operations defined as $O_{i}(\cdot) = ReLU(W^{i}\cdot)$ for $i = 1, \dots, n$. Similarly, "CNN $N_0$-$N_1$-$\dots$-$N_{n_1}|L_1$-$\dots$-$L_{n_2}$" refers to a convolutional neural network with $n_1$ convolutional blocks each one of kernel size $N_i$ for $i = 1,\dots, N_1$ followed by a flattening layer and $n_2$ MLP blocks of width $L_i$ for $i = 1,\dots, n_2$. Within each convolutional block, the operations of convolution, batch normalization, ReLU activation, and max pooling are performed sequentially. Moreover, the flattening operation is executed by applying a common convolutional kernel to all the channels of the last convolutional layer. Hence, given the input $A\in \R^{N_{c}\times k\times k}$ the flattening operation $Flat_K: \R^{N_{c}\times k\times k} \to \R^{N_c}$ is defined as follows:\[
\left[Flat_K(A)\right]_l : = A_{l::} \star K = \sum_{i = 1}^k \sum_{j = 1}^k A_{l,i,j} K_{i,j}  \, ,
\]
for $l = 1,\dots, N_c$ where $A_{l::}$ denotes the $\mathbb{R}^{k\times k}$ obtained by fixing the first dimension at index $l$ and $K$ is a $k\times k$ convolutional kernel which is a parametric matrix in applications. This approach effectively reduce the number of parameters allowing us to compute the effective dimension in reasonable time. 

\textbf{Introducing stochasticity.} In applications, the core architectures of many deep learning models is deterministic, and the stochasticity is usually introduced in the training pipeline rather than in the model itself. This makes deep learning models, like MLPs and CNNs, incompatible with our setting. Therefore, we approximate deterministic feed-forward neural networks with stochastic variants, where the output of each block is Gaussian with mean the current block deterministic output and a small fixed variance $\sigma^2$. In other words, if $N$ is the number of blocks, the output of the $i$-th block $O^{\sigma}_i$ is given by \[
O^{\sigma}_i 	= O_i + \nu \sim \mathcal{N}(O_i, \sigma^2 I) \, ,
\]
where $O_i$ is the deterministic output of the $i$-th block, $\nu \sim \mathcal{N}(0,  \sigma^2)$.
For all the subsequent experiments, we will specifically focus on the computation of 2sED and lower-2sED for $\zeta = 0$ and consider the empirical Fisher information matrix $\hat{F}_N$. Note we compare different network topologies while keeping more or less the same number of parameters. Hence, the informative part of the definition of the (lower) 2sED automatically becomes the log ratio.

\textbf{Sharpness of lower 2sED.} To empirically validate the lower 2sED, we compute $\dinf_{0}$ and $d_{0}$ for different stochastic perturbations of feed-forward neural networks, also varying the covering radius $\e$. We keep constant both the 100 samples used to estimate $\hat{F}_{N}$ and the 100 vectors of parameters employed for estimating the integrals appearing in \eqref{eq:dzetaeps} and  \eqref{lowerbound}. The results are visualized in Figure \ref{fig:MLP2sED} and Figure \ref{fig:CNN2sED}.

\begin{figure}[h]
  \centering
  \begin{subfigure}{0.4\linewidth}
    \centering
    \includegraphics[width=\linewidth]{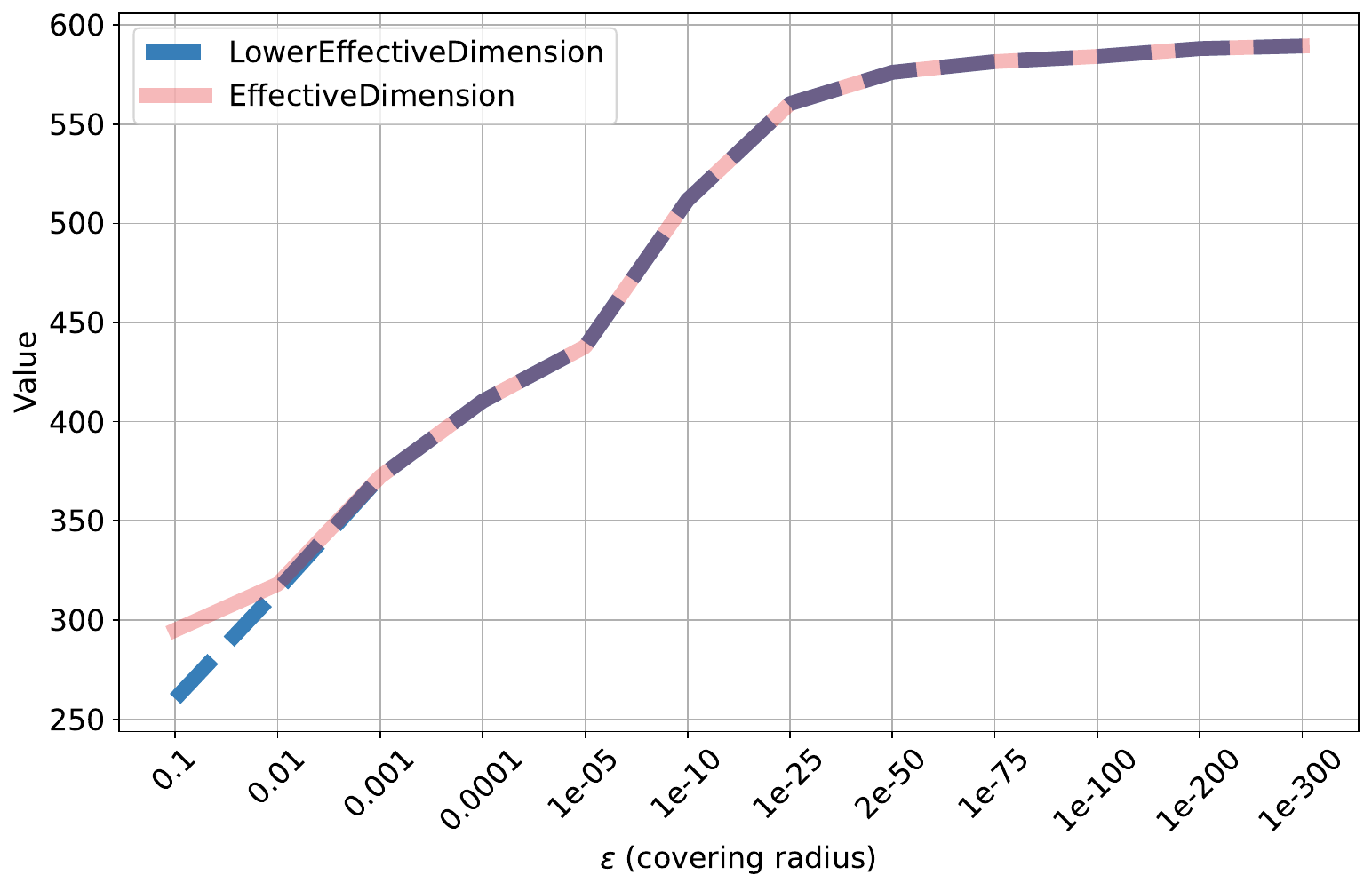}
    \caption{}
    \label{fig:MLP2sED}
  \end{subfigure}%
  \quad
  \begin{subfigure}{0.4\linewidth}
    \centering
    \includegraphics[width=\linewidth]{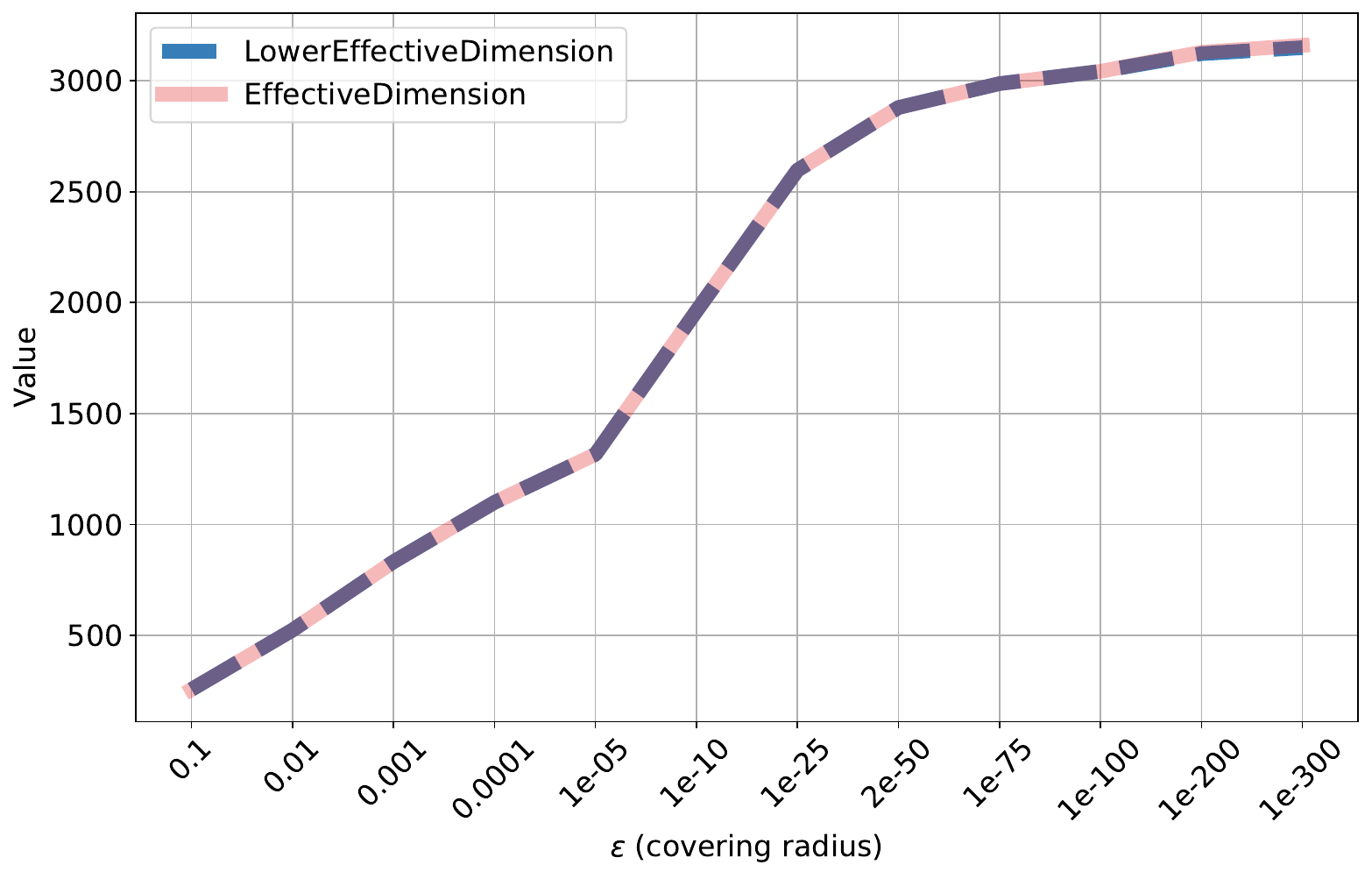}
    \caption{}
    \label{fig:CNN2sED}
  \end{subfigure}
  \caption{(a) Difference between 2sED and lower 2sED can not be appreciated, which means that the second is a tight lower bound of the first. Here, the lower 2sED and 2sED of $MLP$ $54$-$16$-$7$  using 100 Covertype samples and 100 vectors of parameters for the Monte Carlo estimation of $F_N$ is shown; (b)Lower 2sED and 2sED of $CNN$ $7$-$5|10$-$50$-$34$-$10$ using 100 MNIST samples and 100 vectors of parameters for the Monte Carlo estimation of $F_N$ are extremely close.}
\end{figure}

	Notably, the lower bound is sharp in both the MLP and the CNN case suggesting the conclusions regarding model complexity obtained using $\dinf_{\zeta}(\e)$ are equivalent to those obtained when considering $d_{\zeta}(\e)$ for all covering radius $\e$. It is also worth noticing that lower 2sED exhibits a sequential form, reducing the computational demands when investigating how the model's complexity changes by modifying only its final components.
	
\textbf{Dependence on $\mathbf{\sigma^2}$.} We study now the impact of variance $\sigma^2$ on 2sED and lower 2sED. We vary the values of $\sigma^2$ while computing the $\dinf_{\zeta}$ and  $d_{\zeta}$ for different models on Covertype and MNIST dataset. Figure \ref{fig:MLPsigma} and Figure \ref{fig:CNNsigma} show that the impact of $\sigma^2$ on $\dinf_{\zeta}$ and  $d_{\zeta}$ is negligible.

\textbf{Stability of Monte Carlo estimates.} Monte Carlo integration is crucial in the estimation of both the Fisher information matrix and the integral within $\Theta$ appearing in \eqref{eq:dzetaeps}. To ensure the reliability of our results, we conduct a robustness analysis of the lower 2sED with respect to variations in the number of samples and parameterizations employed for integrals estimation. Figures \ref{fig:MLP100100}, \ref{fig:MLP10001000}, \ref{fig:CNN100100}, \ref{fig:CNN500100} confirm the stability of the lower 2sED plots with respect to the number of points used in the Monte Carlo approximation. The 2sED is computed for three different models on Covertype, MNIST and Cifar10 dataset.

\begin{figure}[!ht]
  \centering
  \begin{subfigure}{0.45\linewidth}
    \centering
   \includegraphics[width=\linewidth]{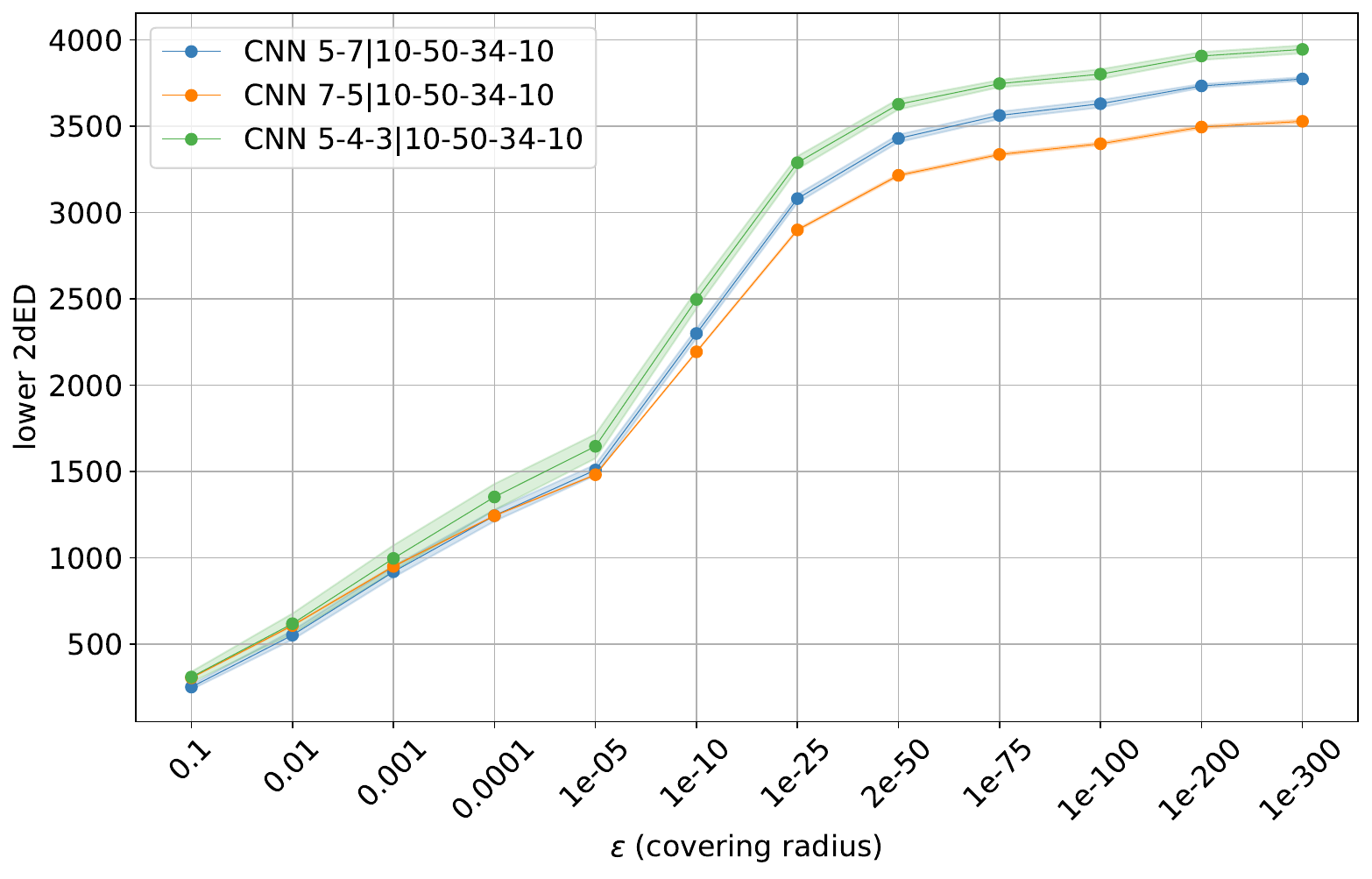}
    \caption{}
   \label{fig:MLP100100}
  \end{subfigure}\hfill
  \begin{subfigure}{0.45\linewidth}
    \centering
    \includegraphics[width=\linewidth]{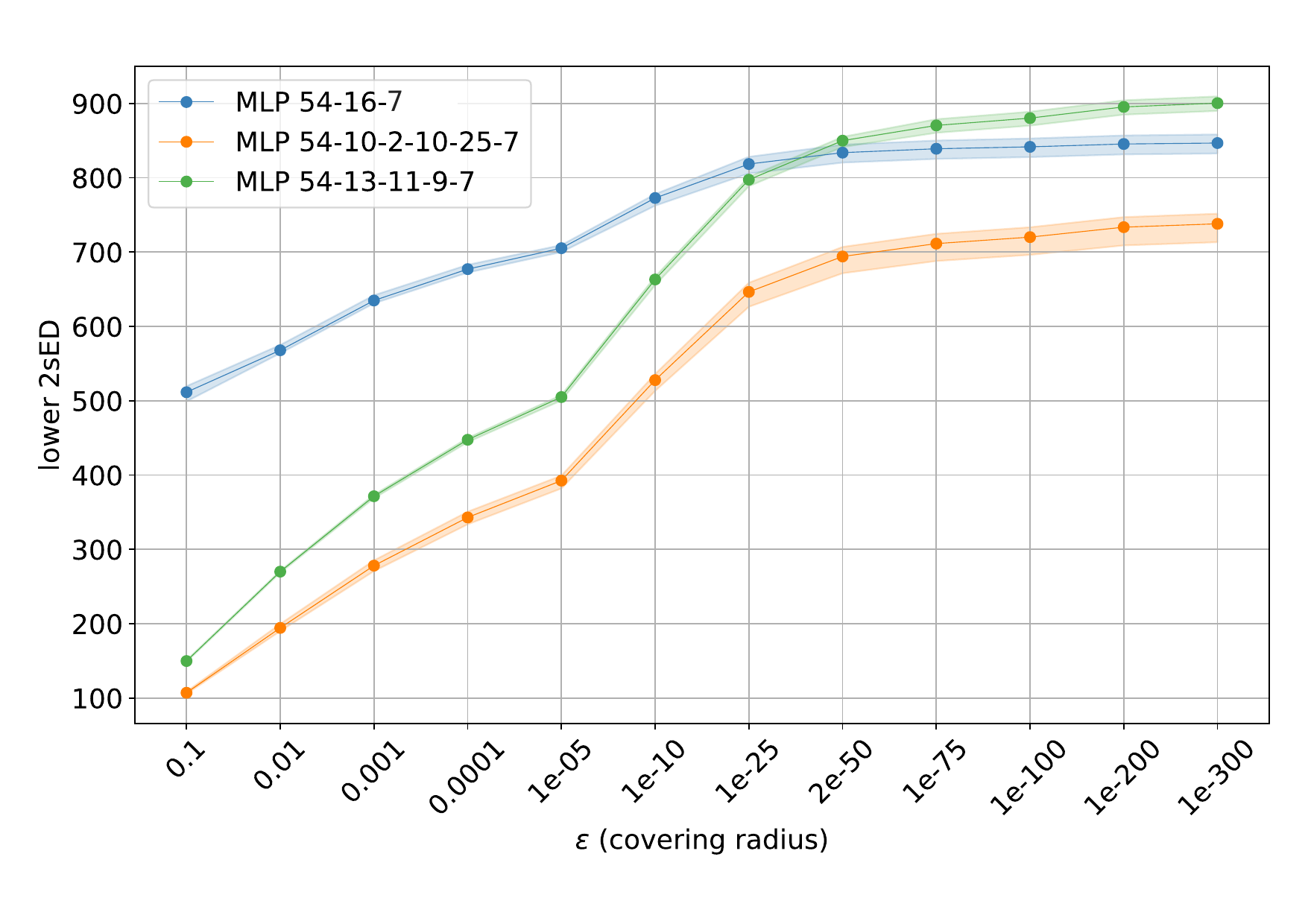}
    \caption{}
    \label{fig:MLP10001000}
  \end{subfigure}
  \begin{subfigure}{0.45\linewidth}
    \centering
    \includegraphics[width=\linewidth]{LowerEffectiveDimensionnum_points_100_num_thetas_100.pdf}
    \caption{}
    \label{fig:CNN100100}
  \end{subfigure}\hfill
  \begin{subfigure}{0.45\linewidth}
    \centering
    \includegraphics[width=\linewidth]{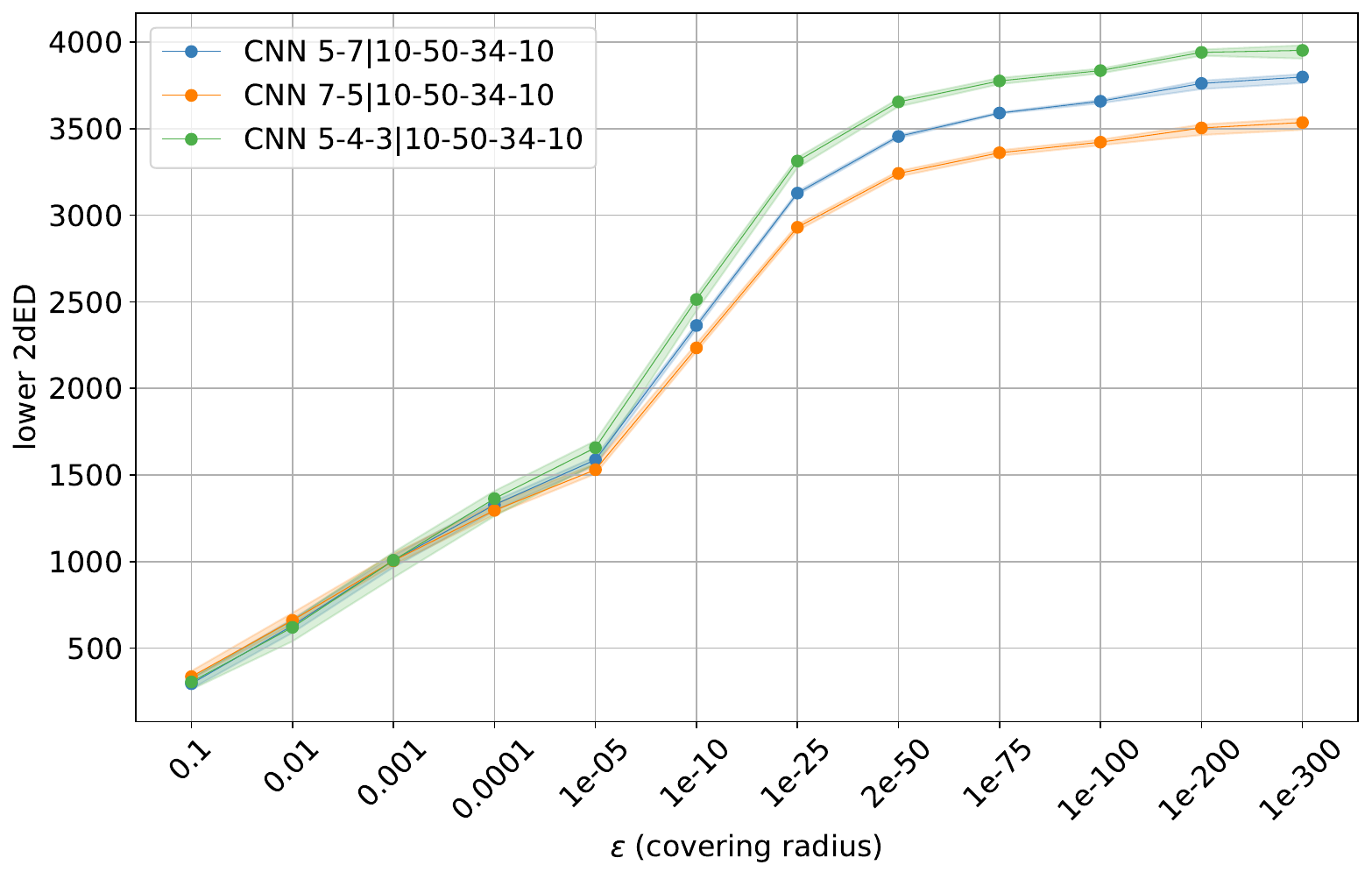}
    \caption{}
    \label{fig:CNN500100}
  \end{subfigure}
   \caption{(a) Stability of the lower 2sED of CNNs estimated using 100 Covertype samples and 100 vectors of parameters for the Monte Carlo approximation with the corresponding error margin; (b) Stability of the lower 2sED of MLPs estimated using 1000 Covertype samples and 1000 vectors of parameters for the Monte Carlo approximation with the corresponding error margin; (c) Stability of the lower 2sED of CNNs estimated using 100 Covertype samples and 100 vectors of parameters for the Monte Carlo approximation with the corresponding error margin; (d) Stability of the lower 2sED of CNNs estimated using 500 Covertype samples and 100 vectors of parameters for the Monte Carlo approximation with the corresponding error margin.}
\end{figure}

\textbf{Lower 2sED and training curves.} Finally, we test the relationship between the lower 2sED and the loss minimization. We expect that models with higher values of the lower 2sED can achieve higher accuracy after training. Furthermore, it is crucial to gain a deeper understanding of the role played by the covering radius, denoted as $\e$, in the 2sED definition. We compute the lower 2sED for three different models with similar dimension on CIFAR10 and Covertype dataset. The dimension of these models is reported in Table \ref{tab:model_parametersapp}. MLP 54-10-2-10-25-7 is characterized by a bottleneck structure in the middle of its architecture. A loss of information due to this bottleneck is therefore expected as data are mapped into a significantly lower dimensional space. Consequently, the expressiveness of this model is expected to be lower compared to the other two models, even though it is bigger than MLP 54-16-7 in terms of number of parameters. In Figure \ref{fig:2sEDMLPmain}, this expected behaviour is effectively captured by the lower 2sE, as indicated by the lower position of the red curve in comparison to the other two curves.
\begin{table}[h]
\caption{Number of model's parameters}
\label{tab:model_parametersapp}
\begin{center}
\begin{small}
\begin{sc}
\begin{tabular}{lcccr}
\toprule
Model & Number of Parameters  \\
\midrule
MLP 54-16-7  & 976 \\
MLP 54-13-11-9-7 & 1007 \\
MLP 54-10-2-10-25-7 & 1005 \\
CNN 7-5$|$10-50-34-10 & 4493 \\
CNN 3-5-3-6$|$10-50-34-10 & 10034 \\
CNN 3-6-5-3$|$10-50-34-10 &10041\\
\bottomrule
\end{tabular}
\end{sc}
\end{small}
\end{center}
\end{table}

\begin{figure}[!ht]
  \centering
  \begin{subfigure}{0.45\linewidth}
    \centering
    \includegraphics[width=\linewidth]{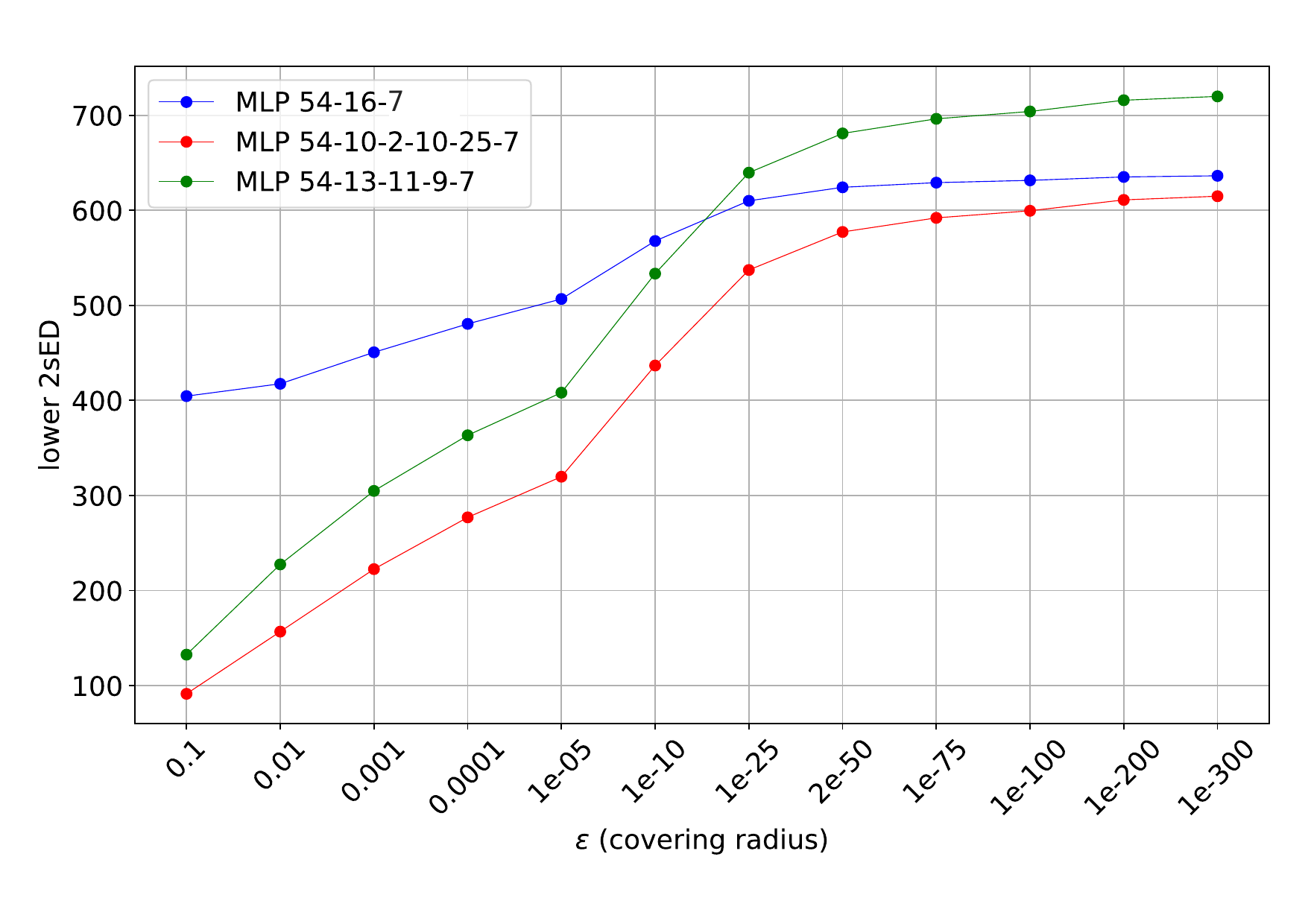}
    \caption{}
    \label{fig:2sEDMLPmain}
  \end{subfigure}\hfill
  \begin{subfigure}{0.45\linewidth}
    \centering
    \includegraphics[width=\linewidth]{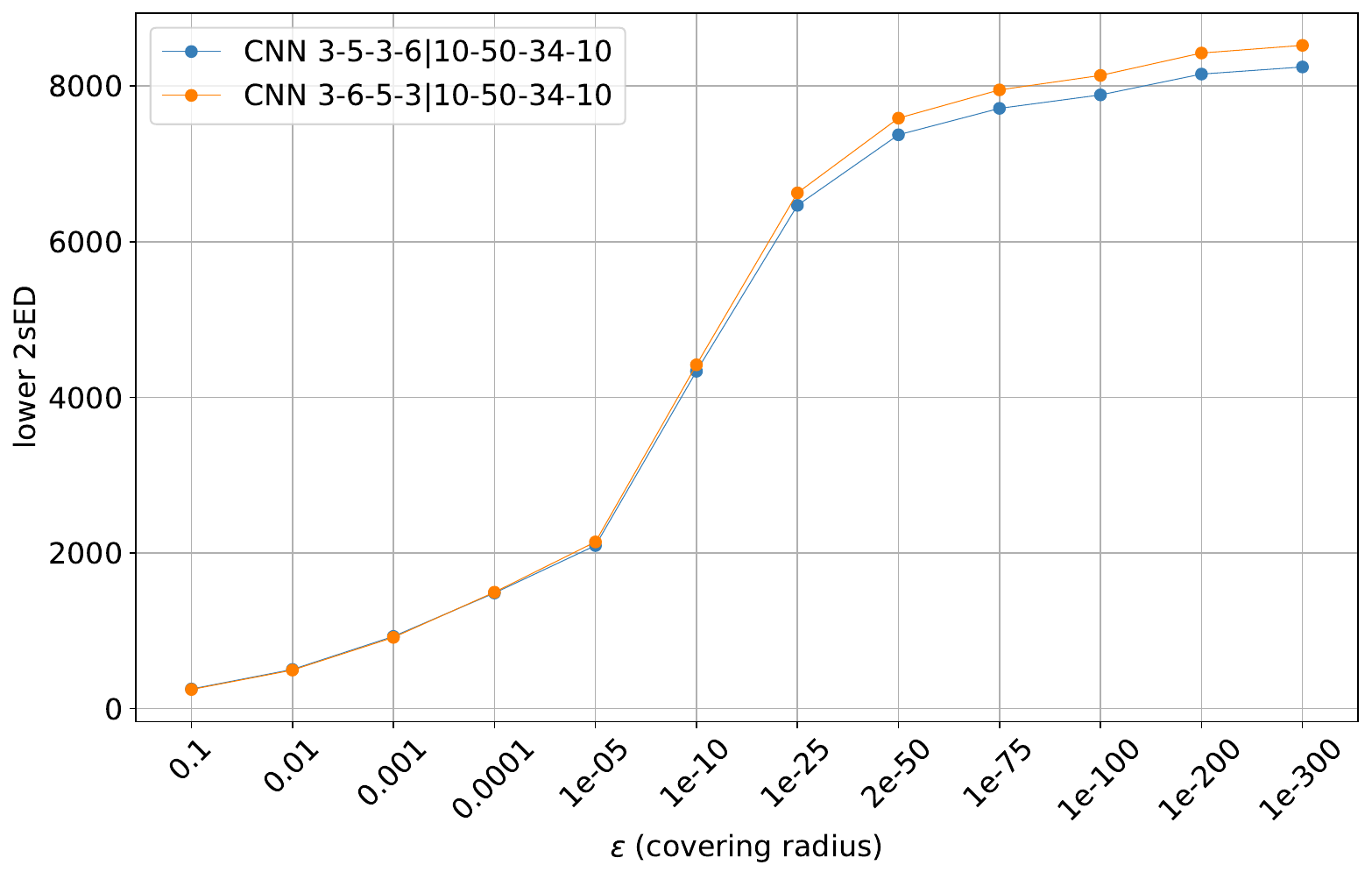}
    \caption{}
    \label{fig:CIFAR102sED}
  \end{subfigure}
  \begin{subfigure}{0.45\linewidth}
    \centering
    \includegraphics[width=\linewidth]{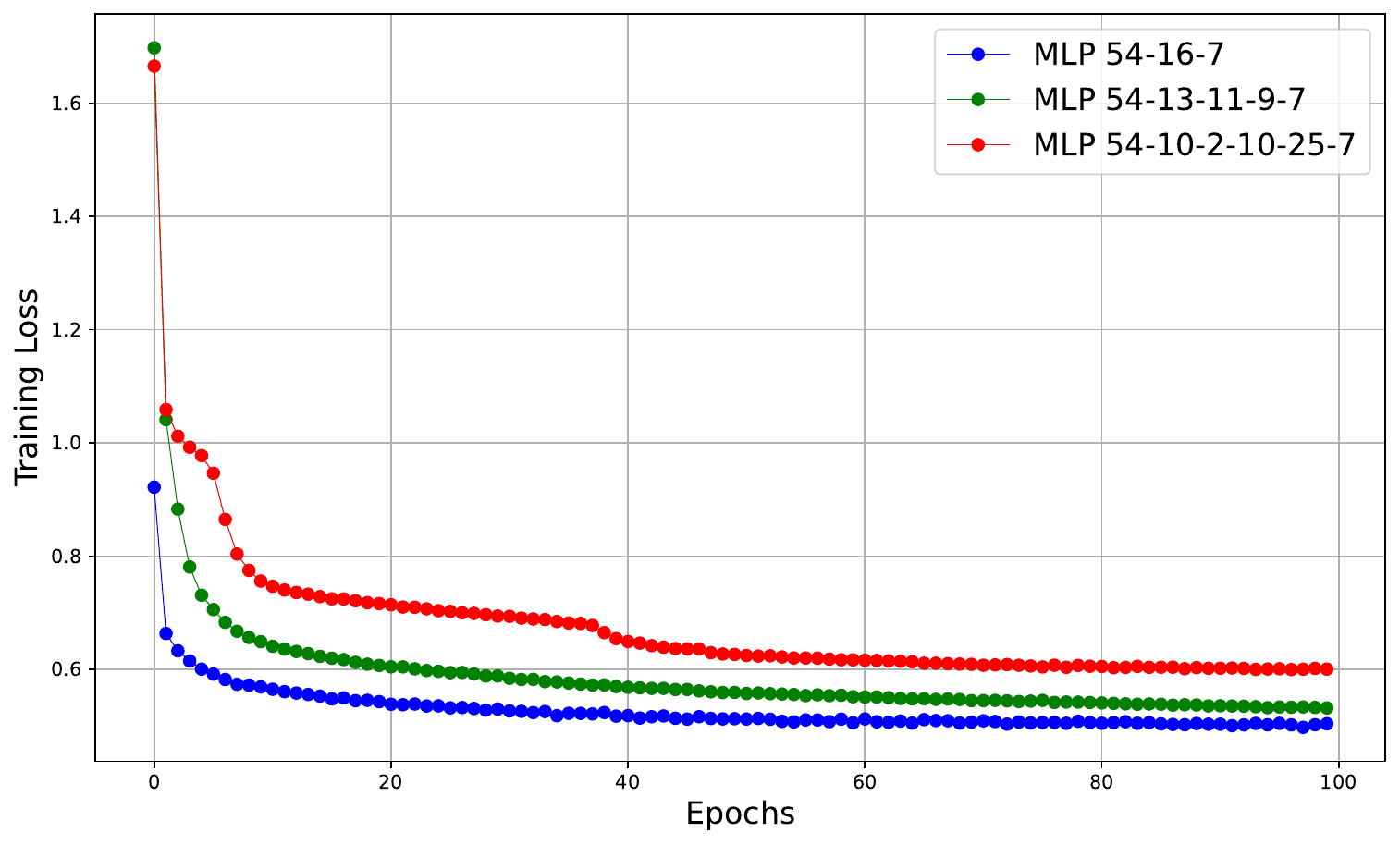}
    \caption{}
    \label{fig:trainMLP10000}
  \end{subfigure}\hfill
  \begin{subfigure}{0.45\linewidth}
    \centering
    \includegraphics[width=\linewidth]{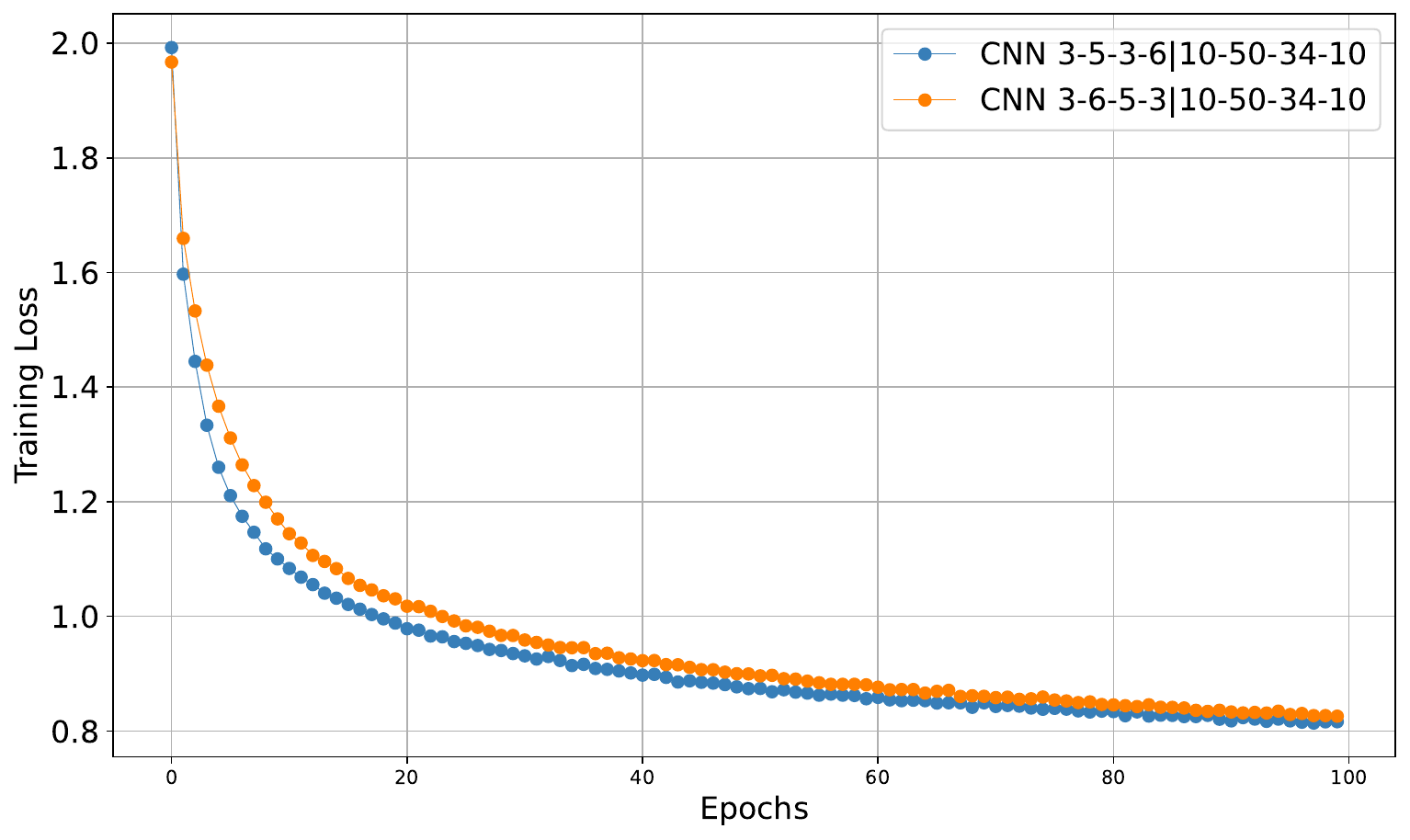}
    \caption{}
    \label{fig:cifartrain}
  \end{subfigure}
  \begin{subfigure}{0.45\linewidth}
    \centering
    \includegraphics[width=\linewidth]{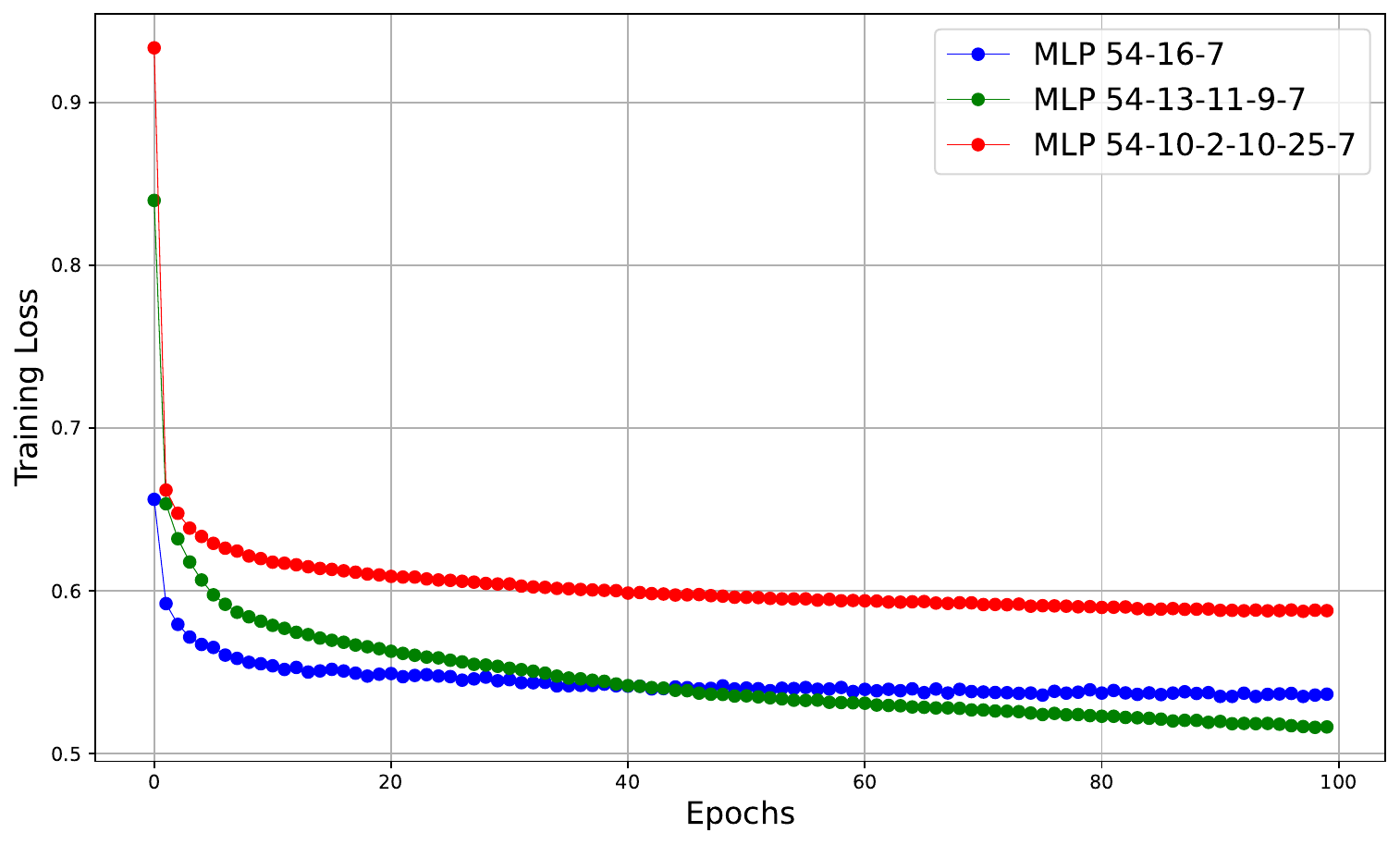}
    \caption{}
    \label{fig:trainMLP100000}
  \end{subfigure}\hfill
  \begin{subfigure}{0.45\linewidth}
    \centering
    \includegraphics[width=\linewidth]{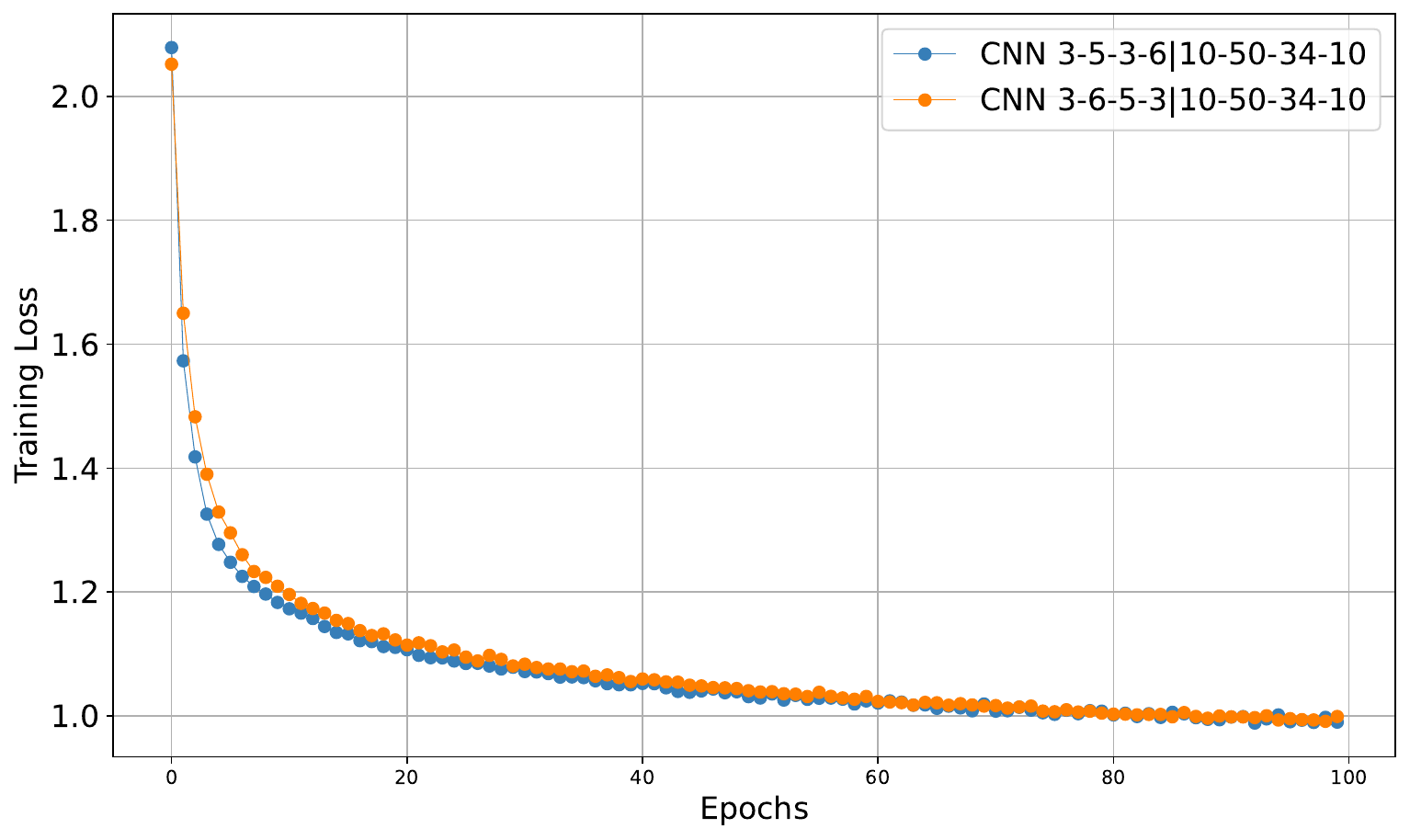}
    \caption{}
    \label{fig:cifartrainaug}
  \end{subfigure}
  \caption{(a) Estimated lower 2sED of three different MLP architectures using 100 Covertype samples and 100 different vectors of parameters for the Monte Carlo estimation of $F_N$; (b) Estimated lower 2sED of three different CNN architectures using 100 CIFAR10 samples and 100 different vectors of parameters for the Monte Carlo estimation of $F_N$; (c) Training loss plots of MLPs on 10000 random Covertype samples using Adam with learning rate $1e^{-3}$ and a batch size $64$; (d) Training loss plots of CNNs on CIFAR10 using Adam optimizer with learning rate $1e^{-3}$ and a batch size $512$;(e) Training loss plot of MLPs on 100000 Covertype samples using Adam optimizer with learning rate $1e^{-3}$ and a batch size $64$; (f) Training loss plots of CNNs on augmented CIFAR10 (double the original size) using Adam optimizer with learning rate $1e^{-3}$ and a batch size $512$.}
\end{figure}
 Furthermore, the position of the curves change varying the covering radius $\e$. The MLP 54-16-7 model exhibits greater expressiveness within this range of $\epsilon$. Conversely, for smaller values of $\epsilon$, MLP 54-13-11-9-7 appears to be more expressive. This behaviour is empirically validated by the experiments. In Figure \ref{fig:trainMLP10000} and Figure \ref{fig:trainMLP100000} we observe the training loss curve for the three models when trained with only $10000$ and $100000$ data respectively. Note that MLP 54-16-7 is the one achieving the lower training loss using 10000 data, while MLP 54-13-11-9-7 is consistently better with 100000 data. The empirical correlation between training losses and lower 2sED underscores the capacity of 2sED as a reliable measure for describing the training capabilities of neural networks. We conducted additional experiments, manipulating the number of training data points. The outcomes align consistently with the previously described results. This further confirms its effectiveness as a capacity metric. 

\textbf{Other experiments.} Other experiments in this direction are performed on the CIFAR10 and MNIST dataset and the results are reported in Figures \ref{fig:CIFAR102sED}, \ref{fig:cifartrain}, \ref{fig:cifartrainaug}, \ref{fig:MNIST2sED}, \ref{fig:mnist256}, \ref{fig:mnist512}, \ref{fig:mnist2048} varying batch sizes providing additional empirical evidence that supports our findings.

\section{Conclusions}
\label{sec:final}
We propose a novel complexity measure called 2sED strongly related with the geometric properties of the statistical manifold. This notion allows to bound the generalization error under mild assumptions on the model (see Theorem~\ref{mainteo}) to theoretically justify the 2sED as a complexity measure. At the same time, a modular version of the 2sED, called lower 2sED, specifically tailored for Markovian models, is introduced as a tight lower bound for the 2sED. The lower 2sED can be computed sequentially layer-by-layer, which drastically reduces the computational effort and the storage consumption compared to the original 2sED. Consequently, the lower 2sED can be computed for more complex (deeper) models than those considered in previous works. 

Finally, numerical simulations based on Monte Carlo approximation of the 2sED and the lower 2sED for various models and datasets are presented. The experiments remarkably confirm desirable properties (P1), (P2), and (P3) for the lower 2sED. These experiments show that the lower 2sED represents a tight approximation of the 2sED. The resulting relation between the scale parameter, the number of training data and the training error suggests that the lower 2sED is a reliable complexity measure that can be used as a tool for training-free model selection. Indeed, it can be accurately estimated for general models, distinguishing it from other complexity measures that are often challenging to compute directly and may only be estimated, often assuming a precise model structure.

\textbf{Limitations and Future perspectives.} 
We study a new notion of complexity for deep learning models and we show empirical evidence that the lower 2sED effectively captures the training behaviour. However, the computation of the lower 2sED for large-scale machine learning models is complicated by the dimension of the FIM, whose eigenvalue problem is computationally intractable. In light of the theoretical nature of this work, the implementation of the 2sED has not been optimized. Exploring code optimization  would be an interesting direction for future research and necessary to study the lower 2sED of bigger models.

Therefore, an interesting research direction is to develop techniques for further reducing the computational cost of the lower 2sED. The main goal is to approximate the eigenvalue distribution of the FIM, rather than computing exactly each eigenvalue. This would consistently improve the effectiveness of the lower 2sED as a model selection criterion, e.g., in the step-by-step design of deep feedforward neural networks. Another avenue for future research is to design variants of 2sED and lower 2sED specifically adapted to very large neural networks, which characterize modern deep learning architectures. This not only would bring out a deeper understanding of the complexity and the generalization capabilities of huge machine learning models, but it could also provide more efficient approximations of the (lower) 2sED for them. 

\newpage
\newcommand{\etalchar}[1]{$^{#1}$}

\newpage
\appendix
\section{Asymptotic property of $d_{\zeta}(\e)$}\label{sec:Asymptotic}
In this section, we prove the following result.
\begin{proposition}\label{prop:FRank}
Let $\hat{r}$ denote the maximum rank of the Fisher matrix $\hat{F}(\bth)$ and let $\mu>0$ be an upper bound for all the eigenvalues of $\hat{F}(\bth)$, for all $\bth\in \Theta$. Then, for all $\zeta \in \left[0, 1\right)$ and $0<\e<1$ we have
\begin{equation}\label{eq:stimazeta}
d_{\zeta}(\e) \le \zeta d + \hat{r}\left(1-\zeta + \frac{\log(1+\mu^{1/2})}{|\log \e|}\right)
\end{equation}
and, moreover, 
\[
\lim_{\e\to 0} d_{\zeta}(\e) = \zeta d + (1-\zeta) \hat{r}\,.
\]
\end{proposition}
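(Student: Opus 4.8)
The plan is to first establish the explicit inequality \eqref{eq:stimazeta}, from which the $\limsup$ half of the limit follows immediately, and then to derive a matching asymptotic lower bound for $\E_{\bth}[\det(I_d+\e^{\zeta-1}\hat F(\bth)^{1/2})]$. A preliminary simplification is convenient: since $0<\e<1$ and $0\le\zeta<1$ one has $\log(\e^{\zeta-1})=(\zeta-1)\log\e>0$, hence $|\log(\e^{\zeta-1})|=(1-\zeta)|\log\e|$, and the factor $(1-\zeta)$ in \eqref{eq:dzetaeps} cancels, giving
\[
d_{\zeta}(\e)=\zeta d+\frac{1}{|\log\e|}\,\log \E_{\bth}\bigl[\det(I_d+\e^{\zeta-1}\hat F(\bth)^{1/2})\bigr].
\]
Throughout I diagonalize the symmetric positive semidefinite matrix $\hat F(\bth)$, writing $\lambda_1(\bth)\ge\cdots\ge\lambda_d(\bth)\ge0$ for its eigenvalues, so that
\[
\det(I_d+\e^{\zeta-1}\hat F(\bth)^{1/2})=\prod_{i=1}^{d}\bigl(1+\e^{\zeta-1}\lambda_i(\bth)^{1/2}\bigr),
\]
a product in which exactly $\rank\hat F(\bth)\le\hat r$ factors differ from $1$.

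For \eqref{eq:stimazeta}, each nontrivial factor obeys $1+\e^{\zeta-1}\lambda_i(\bth)^{1/2}\le 1+\e^{\zeta-1}\mu^{1/2}$ by the eigenvalue bound; since $1+\e^{\zeta-1}\mu^{1/2}\ge1$ and there are at most $\hat r$ such factors, $\det(I_d+\e^{\zeta-1}\hat F(\bth)^{1/2})\le(1+\e^{\zeta-1}\mu^{1/2})^{\hat r}$ for every $\bth$, and taking $\E_{\bth}$ preserves this deterministic bound. Using $\e^{\zeta-1}\ge1$ (as $\e<1$ and $\zeta-1\le0$) I then bound $1+\e^{\zeta-1}\mu^{1/2}\le\e^{\zeta-1}(1+\mu^{1/2})$, so that $\log(1+\e^{\zeta-1}\mu^{1/2})\le(1-\zeta)|\log\e|+\log(1+\mu^{1/2})$; substituting into the simplified formula for $d_{\zeta}(\e)$ yields exactly \eqref{eq:stimazeta}, and letting $\e\to0$ gives $\limsup_{\e\to0}d_{\zeta}(\e)\le\zeta d+(1-\zeta)\hat r$.

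For the reverse inequality I restrict the expectation to the set $U:=\{\bth\in\Theta:\rank\hat F(\bth)=\hat r\}$, which is nonempty (the maximum rank is attained) and open, by lower semicontinuity of the rank together with the continuity of $\bth\mapsto\hat F(\bth)$ (which holds e.g. under assumption (i)); under the standing assumption that the probability measure on $\Theta$ charges nonempty open sets --- as for the uniform measure used in \cite{abbas2021power} --- we get $\P(U)>0$. On $U$ exactly $\hat r$ of the $\lambda_i(\bth)$ are positive, so discarding the remaining factors and the $+1$ summands gives $\det(I_d+\e^{\zeta-1}\hat F(\bth)^{1/2})\ge\e^{(\zeta-1)\hat r}\prod_{i=1}^{\hat r}\lambda_i(\bth)^{1/2}$ for $\bth\in U$, whence $\E_{\bth}[\det(I_d+\e^{\zeta-1}\hat F(\bth)^{1/2})]\ge c\,\e^{(\zeta-1)\hat r}$, where $c:=\E_{\bth}\bigl[\mathbf{1}_U\prod_{i=1}^{\hat r}\lambda_i(\bth)^{1/2}\bigr]$ lies in $(0,\mu^{\hat r/2}]$, being strictly positive since its integrand is strictly positive on the positive-measure set $U$. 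Taking logarithms, dividing by $|\log\e|$ and using $(\zeta-1)\hat r\log\e=(1-\zeta)\hat r|\log\e|$ gives $d_{\zeta}(\e)\ge\zeta d+(1-\zeta)\hat r+\log c/|\log\e|$, so $\liminf_{\e\to0}d_{\zeta}(\e)\ge\zeta d+(1-\zeta)\hat r$; combined with the previous paragraph this proves both assertions.

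The determinant factorization and the two elementary estimates on $1+\e^{\zeta-1}\mu^{1/2}$ are routine. The one genuinely delicate step is the lower bound: one must ensure that the maximal-rank locus $U$ has positive measure, which is precisely why continuity of the Fisher field and a mild non-degeneracy of the reference measure on $\Theta$ (implicit here, explicit in the cited works) cannot be dispensed with --- without them the limit can fail, e.g. if $\E_{\bth}$ concentrates on a region where $\rank\hat F$ is strictly below $\hat r$.
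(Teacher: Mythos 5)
Your proof is correct and follows essentially the same route as the paper's: the upper bound by factoring $\e^{\zeta-1}$ out of each nontrivial eigenvalue factor and bounding eigenvalues by $\mu$, and the lower bound by restricting the expectation to the maximal-rank locus and keeping only the $\e^{\zeta-1}\lambda_i^{1/2}$ terms. The only differences are cosmetic (you drop the $+1$ summands directly rather than invoking $\det(A+B)\ge\det A+\det B$, and you are more explicit than the paper about why the maximal-rank set has positive measure and why the constant $c$ is strictly positive), which if anything tightens the exposition.
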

\begin{proof}
Let us fix $\zeta,\e$ as above. Denoting by $r_{\bth}$ the rank of $\hat{F}(\bth)$, we have: 
\begin{align*}
d_{\zeta} (\e) & = \zeta d + \frac{\log \, \fint_{\Theta} \det( Id_{d} + \e^{\zeta -1} \hat{F}^{1/2}(\bth)) \, d\bth}{ |\log \e|} \\
& = \zeta d +\frac{\log \, \fint_{\Theta} \prod_{i=1}^{r_{\bth}}(1 +  \e^{\zeta -1} \lambda^{1/2}_i(\bth)) \, d\bth}{ |\log \e|}  \\
& \le \zeta d + \log \, \frac{ \fint_{\Theta} \e^{(\zeta -1)r_{\bth}}\prod_{i=1}^{r_{\bth}}(1 +   \lambda^{1/2}_i(\bth)) \, d\bth}{ |\log \e|} \\
& \le \zeta d + \frac{\log  \e^{(\zeta -1)\hat{r}}}{|\log \e|}+  \frac{\log \, \fint_{\Theta}\prod_{i=1}^{r_{\bth}}(1 +   \lambda^{1/2}_i(\bth)) \, d\bth}{ |\log \e|} \, ,
\end{align*}
where $\lambda_i(\bth)$ are the nonzero eigenvalues of $\hat{F}(\bth)$. Notice that $\log \, \fint_{\Theta}\prod_{i=1}^{r_{\bth}}(1 +   \lambda^{1/2}_i(\bth)) \, d\bth$ is finite. Indeed, $0\le\lambda_i(\bth)\le\mu$ by assumption. Then it holds 
\[
1\le \prod_{i=1}^{r_{\bth}}(1 +   \lambda^{1/2}_i(\bth)) \le (1 + \mu^{
1/2})^{\hat{r}}\,.
\] 
This implies that  
\[
1\le \fint_{\Theta}\prod_{i=1}^{r_{\bth}}(1 +   \lambda^{1/2}_i(\bth))\le (1 + \mu^{1/2})^{\hat{r}}
\]
and therefore \eqref{eq:stimazeta}. We thus conclude that
\begin{equation}
\lim_{\e\to 0} d_{\zeta} (\e) \le \zeta d + (1 - \zeta) \hat{r}\,.
\label{lowine}
\end{equation}
To see the other inequality, let us consider $\mathcal{A} : = \{\bth\in\Theta: \, r_{\bth} = \hat{r}\}$. Notice that $\mathcal{A} \subset \Theta$ and hence $|\mathcal{A}| < \infty$.
Also, by continuity of the Fisher matrix, the set $\mathcal{A}$ has positive measure. Then, we have
\begin{align*}
d_{\zeta} (\e) & = \zeta d + \frac{\log \, \fint_{\Theta} \det( Id_{d} + \e^{\zeta -1} \hat{F}^{\frac{1}{2}}(\bth)) \, d\bth}{ |\log \e|}  \\
&\ge \zeta d + \frac{\log \, \fint_{\mathcal{A}} \det( Id_{d} + \e^{\zeta -1} \hat{F}^{\frac{1}{2}}(\bth)) \, d\bth}{ |\log \e|} \\
& = \zeta d + \frac{\log \, \fint_{\mathcal{A}} \det( Id_{d_{\bth}} + \e^{\zeta -1} \hat{F}_{0}^{\frac{1}{2}}(\bth)) \, d\bth}{ |\log \e|}\, ,
\end{align*}
where $d_{\bth}$ is the number of non-zero eigenvalues of $\hat{F}(\bth)$ and $\hat{F}_{0}(\bth)$ is the diagonal $d_{\bth}\times d_{\bth} $ containing only the $d_{\bth}$ non-zero eigenvalues of $\hat{F}(\bth)$ for all $\bth\in\Theta$. This yields
\begin{align*}
 d_{\zeta} (\e)
&\ge \zeta d +  \frac{\log \, \fint_{\mathcal{A}} \det( Id_{d}) + \det(\e^{\zeta -1} \hat{F}_{0}^{\frac{1}{2}}(\bth)) \, d\bth}{ |\log \e|}\\
& = \zeta d + \frac{\log \, |\mathcal{A}|}{|\log \e|} +  \frac{\log \, \fint_{\mathcal{A}}  \prod_{i=1}^{\hat{r}} \e^{\zeta - 1} \lambda^{\frac{1}{2}}_i(\bth) \, d\bth}{ |\log \e|} \\
& =  \zeta d + \frac{\log \, |\mathcal{A}|}{|\log \e|} +  \frac{\log \, \fint_{\mathcal{A}}  \prod_{i=1}^{\hat{r}} \e^{\zeta - 1} \lambda^{\frac{1}{2}}_i(\bth) \, d\bth}{ |\log \e|} \\
& =  \zeta d +  \hat{r}(\zeta -1)\frac{ \log \e}{ |\log \e|} + \frac{\log\, \fint_{\mathcal{A}} \lambda^{\frac{1}{2}}_i(\bth) \, d\bth }{|\log \e|} \\
& =  \zeta d +  \hat{r}( 1 - \zeta) + \frac{\log\, \fint_{\mathcal{A}} \lambda^{\frac{1}{2}}_i(\bth) \, d\bth }{|\log \e|} \, .
\end{align*}
Notice now that since $\lambda_i(\bth) \ne 0$ for all $\bth \in\Theta$, it holds that $log\, \fint_{\mathcal{A}} \lambda^{\frac{1}{2}}_i(\bth) \, d\bth < \infty$ and so: \[
\lim_{\e\to 0 } \frac{\log\, \fint_{\mathcal{A}} \lambda^{\frac{1}{2}}_i(\bth) \, d\bth }{|\log \e|} = 0 \, .
\]
Therefore
\begin{equation}
\lim_{\e\to 0} d_{\zeta}(\e) \ge  \zeta d +  \hat{r}( 1 - \zeta) \, .
\label{upine}
\end{equation}
Combining \eqref{lowine} and \eqref{upine}, we conclude
\[
\lim_{\e\to 0} d_{\zeta}(\e) =  \zeta d +  \hat{r}( 1 - \zeta) \, .
\]
\end{proof}


\section{Proof of the generalization bound}
\label{AppendixA}
We conveniently introduce additional terminology and a few of new definitions concerning $d\times d$ symmetric matrices and matrix fields. We denote by $\sdp$ the set of real $d\times d$ symmetric and positive semidefinite matrices.

Recall that, for all $A\in \sdp$ and $x\in \R^{d}$, 
\[
|\langle Ax, x \rangle| \le \|A\| |x|^2\,,
\]
where $\|\cdot\|$ denotes the Frobenius norm (this is because the Frobenius norm bounds from above the operator norm). Let $\beta>0$, then for $A\in \sdp$ we define $A_{\beta}$ by replacing its eigenvalues smaller than $\beta$ with $\beta$. In practice, we consider a spectral basis $\{u_{j}\}_{j=1,\dots d}$ with its corresponding sequence of eigenvalues $\{\lambda_{j}\}_{j=1,\dots,d}$, then define
\begin{equation}\label{eq:Abeta}
A_{\beta} = \sum_{j=1}^{d} \max(\lambda_{j},\beta) u_{j}\otimes u_{j}\,.
\end{equation}
Note that this definition does not depend on the choice of the basis because the matrix $A$ only depends on its eigenvalues and eigenspaces. Given a matrix field $A:\Theta\to \sdp$, we define the pointed $A$ norm of a (tangent) vector $v\in\R^d$ at $\bth\in \Theta$ as
\[
\|v\|_{A(\bth)} := \sqrt{\langle A(\bth)v,v\rangle}\,.
\]
Let $A\in\sdp$, we choose a spectral basis $U = \{u_{i}\}_{i=1..d}$ for $A$ and, for any $v\in \R^{d}$, we set
\begin{equation}\label{eq:Abetanorm}
[v]_{A,U} := \max_{i=1,\dots,d} \sqrt{\lambda_{i}}|\langle v,u_{i}\rangle|\,,
\end{equation}
where $\lambda_i$ is the eigenvalue corresponding to the eigenvector $u_{i}$. 

\begin{lemma}\label{lem:lip2crucial}
Let $A:\Theta\to \sdp$ be a $L$-Lipschitz matrix field. Then for all $\beta>0$, $v\in \R^d$, and $\bth_{1},\bth_{2}\in \Theta$, one has 
\begin{equation}\label{eq:crucial}
\|v\|_{A_{\beta}(\bth_{2})}^{2} \le \big(3 + \omega_\beta(|\bth_{1}-\bth_{2}|)\big)\|v\|_{A_{\beta}(\bth_{1})}^{2}\,,
\end{equation}
where
\begin{equation}\label{eq:ombetat}
\omega_\beta(t)= L\beta^{-1}t\,.
\end{equation}
\end{lemma}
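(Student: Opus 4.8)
The plan is to reduce the inequality to two separate facts: a comparison of the $[\cdot]_{A_\beta,U}$ seminorms at $\bth_1$ and $\bth_2$, and the elementary equivalence between $\|v\|_{A_\beta(\bth)}$ and $[v]_{A_\beta,U}$ for any fixed spectral basis $U$ of $A_\beta(\bth)$. First I would record the basis-equivalence: if $U$ is a spectral basis of $A_\beta(\bth)$ with eigenvalues $\mu_i = \max(\lambda_i,\beta)$, then writing $v=\sum_i \langle v,u_i\rangle u_i$ gives $\|v\|_{A_\beta(\bth)}^2 = \sum_i \mu_i \langle v,u_i\rangle^2$, so that $[v]_{A_\beta,U}^2 \le \|v\|_{A_\beta(\bth)}^2 \le d\,[v]_{A_\beta,U}^2$. (Depending on how the constant $3$ is to be reached, one may instead want a version of $[\cdot]$ that sums rather than maximises; I will keep the max-based one and absorb the dimensional loss, or tighten later.) The point of passing through $[\cdot]$ is that it isolates the dependence on a single eigenvector direction, which is what makes the Lipschitz perturbation tractable.

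The heart of the argument is the directional comparison. Fix $\bth_1,\bth_2$ and a unit eigenvector $u$ of $A_\beta(\bth_2)$ with eigenvalue $\mu = \max(\lambda,\beta)\ge \beta$. I want to bound $\mu\,\langle v,u\rangle^2$ by $(3+\omega_\beta(|\bth_1-\bth_2|))\,\|v\|_{A_\beta(\bth_1)}^2$. Write $\mu\langle v,u\rangle^2 = \langle A_\beta(\bth_2) v, P_u v\rangle$ where $P_u$ is the projection onto $u$; more directly, $\|v\|_{A_\beta(\bth_2)}^2 \le \|v\|_{A_\beta(\bth_1)}^2 + |\langle (A_\beta(\bth_2)-A_\beta(\bth_1))v,v\rangle| \le \|v\|_{A_\beta(\bth_1)}^2 + \|A_\beta(\bth_2)-A_\beta(\bth_1)\|\,|v|^2$, using the fact recalled before the lemma that $|\langle Bx,x\rangle|\le\|B\|\,|x|^2$. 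Two things remain: (a) $\|A_\beta(\bth_2)-A_\beta(\bth_1)\| \le \|A(\bth_2)-A(\bth_1)\| \le L|\bth_1-\bth_2|$, because the map $A\mapsto A_\beta$ (i.e. $t\mapsto\max(t,\beta)$ applied to eigenvalues) is $1$-Lipschitz on symmetric matrices w.r.t. the Frobenius norm — this is a standard fact about spectral (Löwner-type) functions with $1$-Lipschitz scalar part, which I would either cite or prove in two lines; (b) bounding $|v|^2$ by $\|v\|_{A_\beta(\bth_1)}^2$: since every eigenvalue of $A_\beta(\bth_1)$ is at least $\beta$, we have $A_\beta(\bth_1)\succeq \beta I$, hence $|v|^2 \le \beta^{-1}\|v\|_{A_\beta(\bth_1)}^2$. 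Combining, $\|v\|_{A_\beta(\bth_2)}^2 \le \big(1 + L\beta^{-1}|\bth_1-\bth_2|\big)\|v\|_{A_\beta(\bth_1)}^2 = \big(1+\omega_\beta(|\bth_1-\bth_2|)\big)\|v\|_{A_\beta(\bth_1)}^2$, which is in fact stronger than \eqref{eq:crucial} with the constant $3$; the slack of $3$ presumably leaves room for the version of the estimate actually used downstream (e.g. if one later needs the corresponding statement for $A$ rather than $A_\beta$, or with $[\cdot]_{A,U}$ in place of $\|\cdot\|_{A(\bth)}$, where the dimensional or cross-term losses cost a constant factor). I would present the clean $1+\omega_\beta$ bound and then note $1\le 3$.

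The main obstacle is item (a): showing that symmetrically truncating eigenvalues from below at $\beta$ does not expand Frobenius distance. The cleanest route is the general lemma that for a $1$-Lipschitz $f:\R\to\R$ the induced map $S^d\to S^d$, $A\mapsto f(A)$, is $1$-Lipschitz in Frobenius norm; applied to $f(t)=\max(t,\beta)$ this is exactly what is needed. If one prefers to avoid quoting that, an elementary proof goes through the integral representation $\max(t,\beta) = \beta + \int_\beta^\infty \mathbf 1_{\{t>s\}}\,ds$, so $A_\beta = \beta I + \int_\beta^\infty (A - sI)_+\, ds$ (spectral positive part), and then uses that $B\mapsto B_+$ is $1$-Lipschitz in Frobenius norm (a classical fact, provable via $B_+ = \tfrac12(B + |B|)$ and $1$-Lipschitzness of $B\mapsto|B|$, or directly). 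Everything else is the routine spectral bookkeeping sketched above, so I would keep those steps terse.
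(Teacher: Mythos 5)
Your argument is correct, and it actually proves a sharper estimate than the lemma asserts. The paper's proof takes a different route: it never compares $A_{\beta}(\bth_{1})$ and $A_{\beta}(\bth_{2})$ directly, but instead inserts the untruncated field, splitting
$|\langle (A_{\beta}(\bth_{1})-A_{\beta}(\bth_{2}))v,v\rangle|$ into a term $|\langle (A(\bth_{1})-A(\bth_{2}))v,v\rangle|\le L|\bth_{1}-\bth_{2}|\,|v|^{2}$ plus two truncation errors, each bounded by $\beta|v|^{2}$ via $|\langle (A-A_{\beta})v,v\rangle|\le\beta|v|^{2}$; after absorbing $\beta|v|^{2}\le\|v\|^{2}_{A_{\beta}(\bth_{1})}$, those two truncation penalties are exactly where the constant $3=1+2$ comes from. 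You avoid them entirely by invoking the Frobenius-norm $1$-Lipschitzness of the spectral map $A\mapsto A_{\beta}$, which yields the constant $1+\omega_{\beta}$ in place of $3+\omega_{\beta}$; since $1\le 3$ this implies \eqref{eq:crucial}. Your key fact (a) is genuinely true, and your first justification (for $1$-Lipschitz scalar $f$ one has $\|f(A)-f(B)\|^{2}=\sum_{i,j}(f(\lambda_{i})-f(\mu_{j}))^{2}|\langle u_{i},w_{j}\rangle|^{2}\le\|A-B\|^{2}$ in the Frobenius norm) is the right one to use. One small slip in your alternative elementary route: the formula $A_{\beta}=\beta I+\int_{\beta}^{\infty}(A-sI)_{+}\,ds$ does not match the scalar identity $\max(t,\beta)=\beta+\int_{\beta}^{\infty}\mathbf{1}_{\{t>s\}}\,ds$ (the matrix integral evaluates to $\beta+(t-\beta)_{+}^{2}/2$ on each eigenvalue); the correct and simpler identity is $\max(t,\beta)=\beta+(t-\beta)_{+}$, i.e.\ $A_{\beta}=\beta I+(A-\beta I)_{+}$, which reduces (a) to the $1$-Lipschitzness of the positive part in one line. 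This does not affect the validity of your argument. The final absorption step $|v|^{2}\le\beta^{-1}\|v\|^{2}_{A_{\beta}(\bth_{1})}$ is identical to the paper's. Your preliminary discussion of the seminorm $[\cdot]_{A,U}$ is not needed for this lemma (it only enters the covering argument of Lemma \ref{lemma1}) and could be dropped. The trade-off between the two proofs: the paper's is entirely elementary but loses a factor, while yours relies on a (standard, and here fully justified) fact about spectral functions and gives the cleaner constant.
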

\begin{proof}
We first note that
\begin{equation}\label{eq:stimaA-Abeta}
|\langle (A - A_{\beta})v,v\rangle| \le \beta |v|^{2}.
\end{equation}
To see this we write $v$ in spectral coordinates and compute 
\begin{align*}
\langle(A-A_{\beta})v,v\rangle = \sum_{j=1}^{d} (\lambda_{j} - \max(\lambda_{j},\beta))^{2}v_{j}^{2} = \sum_{j:\ \lambda_{j}<\beta} (\lambda_{j}-\beta)^{2}v_{j}^{2} \le \beta^{2}|v|^{2}\,,
\end{align*}
whence \eqref{eq:stimaA-Abeta} follows. Now by \eqref{eq:stimaA-Abeta} we obtain the conclusion:
\begin{align*}
\|v\|^{2}_{A_{\beta}(\bth_{2})} &\le \langle A_{\beta}(\bth_{1})v,v\rangle + \left|\langle (A_{\beta}(\bth_{1})- A_{\beta}(\bth_{2}))v,v\rangle\right|\\ 
&\le \langle A_{\beta}(\bth_{1})v,v\rangle + \beta|v|^{2} + \left|\langle A(\bth_{1})v,v\rangle - \langle A(\bth_{2})v,v\rangle\right| + \beta|v|^{2}\\
&\le \langle A_{\beta}(\bth_{1})v,v\rangle + (2 + L\beta^{-1}|\bth_1 - \bth_2|) \beta |v|^2\\
&\le (3 + L\beta^{-1}|\bth_1 - \bth_2|)\|v\|^{2}_{A_{\beta}(\bth_{1})}\,,
\end{align*}
where in the last inequality we have used the fact that 
\[
\|v\|^2_{A_{\beta}} = \sum_{j = 1}^d \max(\lambda_{j},\beta)\, v_{j}^{2} \ge \beta |v|^2\,.
\]
\end{proof}

\begin{lemma}\label{lemma1}
Let $0<\e<1$, $\zeta\in [\frac{2}{3},1)$, $\Theta = [0,1]^d$, and assume that the Fisher matrix field $F(\bth)$ is $L$-Lipschitz. Then, $F$ admits an $L$-Lipschitz extension to the whole $\R^{d}$, and $\Theta$ can be covered by $C_{d}\, \e^{-d_{\zeta}(\e)}$ Fisher balls of radius $\e$, where $d_{\zeta}(\e)$ is as in \eqref{eq:dzetaeps}, and $C_{d}$ is a dimensional constant.
\end{lemma}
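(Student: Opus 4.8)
## Proof proposal

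The plan is to cover $\Theta = [0,1]^d$ by first partitioning it into a grid of small cubes at the meso-scale $\delta = \e^{\zeta}$, and then on each such cube covering by Fisher balls of radius $\e$ using a local, eigenvalue-adapted argument. The key point is that on a cube of side $\delta$ the oscillation of the Fisher matrix field is controlled: by $L$-Lipschitzness, $\|F(\bth) - F(\bth_0)\| \le L\sqrt{d}\,\delta$ for $\bth,\bth_0$ in the same cube. Before the covering, I would dispose of the extension claim: an $L$-Lipschitz map $\Theta \to \sdp$ extends to an $L$-Lipschitz map $\R^d \to \sdp$ by Kirszbraun's theorem (the target, with the Frobenius norm, is a Hilbert space, and $\sdp$ is closed and convex, so one can post-compose with the nearest-point projection, which is $1$-Lipschitz), so this part is essentially a citation.

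For the main covering estimate, fix a reference cube $Q$ of side $\delta$ centered at $\bth_0$, and work with the regularized matrix $F_\beta(\bth_0)$ for a suitable threshold $\beta$ to be chosen (of order a power of $\e$). By Lemma~\ref{lem:lip2crucial}, on $Q$ the norm $\|\cdot\|_{F_\beta(\bth)}$ is comparable to $\|\cdot\|_{F_\beta(\bth_0)}$ up to the factor $3 + \omega_\beta(\sqrt d\,\delta) = 3 + L\beta^{-1}\sqrt d\, \e^{\zeta}$; choosing $\beta$ so that $\beta^{-1}\e^{\zeta}$ stays bounded (e.g. $\beta \sim \e^{\zeta}$, or more precisely $\beta \sim \e$ given $\zeta \ge 2/3$ will be what makes the exponents match — this is where condition (v) enters), this factor is a universal constant. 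Hence it suffices to cover $Q$ by balls for the \emph{fixed} quadratic form $[\,\cdot\,]_{F_\beta(\bth_0),U}$ of \eqref{eq:Abetanorm}, which is a box in the spectral coordinates $u_i$ of $F_\beta(\bth_0)$: a box of half-widths $\e/\sqrt{\max(\lambda_i,\beta)}$. The number of such boxes needed to cover $Q$ (a set of diameter $O(\delta)$ in Euclidean terms) is at most $\prod_{i=1}^d \left(1 + c\,\delta\sqrt{\max(\lambda_i,\beta)}/\e\right) = \prod_{i=1}^d \left(1 + c\, \e^{\zeta-1}\sqrt{\max(\lambda_i(\bth_0),\beta)}\right)$, and each box, being comparable to a ball for $\|\cdot\|_{F_\beta(\bth_0)}$, is contained in a Fisher ball of radius $C\e$ centered at a suitable point (after rescaling $\e$ by the universal constant).

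Summing over the $\delta^{-d} = \e^{-\zeta d}$ cubes $Q$ in the grid, the total number of Fisher balls is bounded by
\[
\e^{-\zeta d}\cdot \max_{\bth_0}\prod_{i=1}^d \left(1 + c\, \e^{\zeta-1}\sqrt{\max(\lambda_i(\bth_0),\beta)}\right)\,.
\]
To finish I would replace the pointwise maximum over $\bth_0$ by the expectation $\E_\bth$ appearing in \eqref{eq:dzetaeps}: since $\E_\bth[\Tr \hat F] = d$ and the factors are controlled, one uses that the grid is finite and compares the worst cube to the average, absorbing the loss into the dimensional constant $C_d$ and into a harmless adjustment of $\e$ (here it is convenient that $\beta$, hence the $\max(\lambda_i,\beta)$ correction, contributes only lower-order terms in $|\log\e|$, so that $\prod_i(1 + c\e^{\zeta-1}\sqrt{\lambda_i})$ up to constants equals $\det(I_d + \e^{\zeta-1}\hat F^{1/2})$ up to a factor $e^{O(d)}$). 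Taking logarithms and dividing by $|\log \e|$ turns the displayed bound into $\zeta d + (1-\zeta)\frac{\log \E_\bth[\det(I_d + \e^{\zeta-1}\hat F^{1/2})]}{|\log \e|} = d_\zeta(\e)$, as required.

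The main obstacle I anticipate is bookkeeping the exponents so that everything closes: the regularization threshold $\beta$, the meso-scale $\e^{\zeta}$, and the micro-scale $\e$ all interact in the comparison factor $3 + L\beta^{-1}\sqrt d\,\e^{\zeta}$ and in the box-count $\prod_i(1 + c\,\e^{\zeta - 1}\sqrt{\max(\lambda_i,\beta)})$, and one needs $\zeta \ge 2/3$ precisely to keep both the distortion factor universal and the $\beta$-dependent terms of lower order than $|\log\e|$, so that replacing $\max(\lambda_i,\beta)$ by $\lambda_i$ is legitimate in the limit defining $d_\zeta(\e)$. The other slightly delicate point is justifying the passage from the cube-wise maximum to the expectation $\E_\bth$ in the definition of 2sED without losing more than a dimensional constant; this should follow from the fact that the number of cubes is itself only $\e^{-\zeta d}$ and the log of the product is $O(d|\log\e|)$-controlled, but it must be spelled out carefully.
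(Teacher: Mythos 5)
Your overall strategy is the same as the paper's: partition $\Theta$ into meso-scale cubes of side $\delta=\e^{\zeta}$, regularize the spectrum at a threshold $\beta$, cover each cube by spectral boxes adapted to $F_{\beta}(\bth_Q)$, and use Lemma \ref{lem:lip2crucial} to convert those boxes into genuine Fisher balls (the inequality $\|\cdot\|_{F_\beta}\ge\|\cdot\|_F$ handles the final passage from $\beta$-Fisher balls to Fisher balls). Two points, one minor and one substantive. Minor: your parenthetical suggestion $\beta\sim\e$ does not work, since then the distortion factor $3+L\beta^{-1}\sqrt{d}\,\delta = 3+L\sqrt{d}\,\e^{\zeta-1}$ blows up; you need $\beta\gtrsim\e^{\zeta}$ for the comparison of norms across a cube, and simultaneously $\delta\sqrt{\beta}/\e\lesssim 1$ for the degenerate directions not to inflate the box count — the paper takes $\beta=\e^{2}/\delta^{2}=\e^{2-2\zeta}$, which satisfies both exactly when $\zeta\ge 2/3$ (your choice $\beta=\e^{\zeta}$ also works under the same constraint).

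The substantive gap is the last step. You bound the total count by $\e^{-\zeta d}\cdot\max_{\bth_0}\prod_i\bigl(1+c\,\e^{\zeta-1}\sqrt{\lambda_i(\bth_0)}\bigr)$ and then propose to replace the maximum over cubes by the expectation $\E_{\bth}$, absorbing the discrepancy into $C_d$. This cannot be done: the ratio between $\max_{\bth}\det\bigl(I+\e^{\zeta-1}F(\bth)^{1/2}\bigr)$ and its mean over $\Theta$ is not controlled by a dimensional constant — it depends on $\e$, on $L$, and on how the large eigenvalues of $F$ are distributed over $\Theta$ — and your heuristic (``the number of cubes is only $\e^{-\zeta d}$ and the log of the product is $O(d|\log\e|)$'') only shows the discrepancy is $o(1)$ after dividing by $|\log\e|$, not that it is a constant for fixed $\e$ as the lemma requires. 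The fix is simple and is what the paper does: keep the bound \emph{local}, writing the count for each cube $Q$ as $C_d\,\e^{-\zeta d}\,|Q|\det\bigl(I+\e^{\zeta-1}F(\bth_Q)^{1/2}\bigr)$ with $\bth_Q$ chosen as a \emph{minimum} point of the determinant on $Q$, so that $|Q|\det(\cdots)(\bth_Q)\le\int_Q\det(\cdots)\,d\bth$; summing over the cubes then yields $C_d\,\e^{-\zeta d}\int_\Theta\det(\cdots)\,d\bth = C_d\,\e^{-d_{\zeta}(\e)}$ directly, with no max-to-mean comparison needed. With that replacement (and the $\beta$ correction above) your argument closes.
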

\begin{proof}
The fact that any $L$-Lipschitz mapping from a subset of $\R^{d}$ into $\R^{m}$ admits an $L$-Lipschitz extension to the whole $\R^{d}$ is classically known as Kirszbraun's Theorem. 
Consider now a partition $\mathcal{Q}$ of $\Theta$ made by closed cubes with mutually disjoint interior and side $\delta = \delta(Q) = \e^{\zeta}$, and let $Q$ be one of these cubes. Set 
\begin{equation}\label{eq:beta}
\beta = \e^{2}/\delta^{2} = \e^{2-2\zeta}\,,
\end{equation}
fix a generic $\bth_Q\in Q$ and a spectral basis $U_{Q}$ for $F(\bth_{Q})$. Then, the $\beta$-Fisher box centered in $\bth_Q$ of radius $\e>0$ is defined as
\[
\FBox_{\beta,\e}(\bth_Q) := \left\{\bth\in \Theta:\ [\bth-\bth_Q]_{F_{\beta}(\bth_Q),U_{Q}} < \e\right\}\,.
\]
Let $S_Q$ be the Euclidean ball circumscribed to $Q$. Consider a partition of $\R^{d}$ by means of translated copies of $\FBox_{\beta,\e}(\bth_{Q})$, then the minimum number of such boxes that have a nonempty intersection with $Q$ is bounded from above by the number $\tilde{k} = \tilde{k}(Q, \beta,\e)$ of boxes that have a nonempty intersection with $S_{Q}$. The volume of each copy of $\FBox_{\beta,\e}(\bth_Q)$ is given by
\[
|\FBox_{\beta,\e}(\bth_Q)| = \prod_{i=1}^d\frac{2\e}{\sqrt{\lambda_{i,\beta}(\theta_Q)}}\,.
\]
At the same time, the union of the covering boxes is contained in $S_{Q}+B_{2\e\sqrt{d}/\sqrt{\beta}}$, i.e., in a Euclidean ball $B_{\rho}$ with
\begin{equation}\label{eq:R}
\rho = \sqrt{d}\left(\frac{\delta}{2} + 2\frac{\e}{\sqrt{\beta}}\right)= \frac{5}{2}\sqrt{d}\delta\,,
\end{equation}
hence its volume is bounded from above by $|B_{\rho}| = \alpha_{d}\, \rho^{d}$,
where $\alpha_d = \pi^{d/2}/\, \Gamma(d/2 + 1)$ is the volume of an Euclidean ball of radius $1$ in $\R^{d}$, and $\Gamma(\cdot)$ is Euler's Gamma function. Therefore we can estimate $\tilde{k}$ from above by the ratio between the upper bound on the volume of the union of the boxes and the volume of a single box. We obtain
\begin{align}\nonumber
\tilde{k} 
&\le \frac{|B_{\rho}|}{|\FBox_{\beta,\e}(\bth_Q)|} =  \frac{\alpha_d \left(\sqrt{d}(\delta/2 + 2\e/\sqrt{\beta})\right)^d}{\prod_{i=1}^d\frac{2\e}{\sqrt{\lambda_{i,\beta}(\bth_Q)}}} =  \frac{\alpha_d \left(5/2\sqrt{d}\delta\right)^d}{\prod_{i=1}^d\frac{2\e}{\sqrt{\lambda_{i,\beta}(\bth_Q)}}}\\\label{eq:ktildestima}
& \le c_d \prod_{i=1}^d \left\lceil \sqrt{\frac{\lambda_{i,\beta}(\bth_Q)}{\beta}} \right\rceil  = c_d \prod_{i=1}^d \left\lceil \sqrt{\frac{ \lambda_i(\bth_Q)}{\beta}} \right\rceil  \, ,
\end{align}
where we have used the special rounding function 
\[
\lceil x\rceil = \min\{k\in \N: k\ge \max(x,1)\}
\]  
(note that $\lceil x\rceil \ge 1$ for all $x$), and where $c_d =  \alpha_{d} (5/4)^{d} d^{d/2}$. Note that, by Stirling's formula $\Gamma(x + 1) \sim \sqrt{2\pi x} (x/e)^{x}$ valid as $x\to+\infty$, we deduce that $c_{d} < (25\pi e/8)^{d/2} < 6^{d}$ for $d$ large enough.

Let us notice that for all $\bth \in S_Q$, the translated copy of $\FBox_{\beta,\e}(\bth_Q)$ centered in $\bth$ is contained in the corresponding translated copy of $B_{\beta, \e'}(\bth_Q)$ centered in $\bth$, where:\[
B_{\beta, \e'}(\bth_{Q}) : = \{\xi \in\Theta: \, \|\xi-\bth_{Q}\|_{F_{\beta}(\bth_{Q})} < \e'\}
\]
and $\e' = \sqrt{d}\e$. Indeed, let $\bth$ be the center of the translated copy $\widetilde{\FBox}$ of $\FBox_{\beta,\e}(\bth_Q)$, then for each $\xi\in S_{Q}\cap \widetilde{\FBox}$ one has by definition $[\xi-\bth]_{F_{\beta}(\bth_{Q}), U_{Q}} < \e$. Consequently we have\begin{align*}
\|\xi-\bth\|_{F_{\beta}(\bth_{Q})} &= \sqrt{\sum_{i=1}^d \lambda_{\beta, i }(\bth_Q) |\langle \xi - \bth, u_i(\bth_Q) \rangle|^2} \\
& \le \sqrt{d} \max_{i =1, \dots, d} \sqrt{\lambda_{\beta, i }(\bth_Q)} |\langle \xi -\bth, u_i(\bth_Q) \rangle| \\
& \le \sqrt{d}[\xi-\bth]_{F_{\beta}(\bth_Q), U_{Q}} \\
& \le \e'
\end{align*}
Now we notice that for all $\bth \in S_{Q}$ the translated copy of $B_{\beta,\e'}(\bth_Q)$ centered in $\bth$ is contained in $B_{\beta,\e''}(\bth)$, where
\begin{equation}\label{eq:epssec}
\e'' = \e\sqrt{d(3+5\sqrt{d}L)}\,. 
\end{equation}
Indeed, let $\bth$ be the center of the translated copy $\widetilde{B}$ of $B_{\beta,\e}(\bth_Q)$, then for each $\xi\in S_{Q}\cap \widetilde{B}$ one has by definition $\|\xi-\bth\|_{F_{\beta}(\bth_{Q})} < \e'$. Consequently, by Lemma \ref{lem:lip2crucial} and by the fact that both $\bth$ and $\bth_{Q}$ are contained in $B_{R}$, one gets
\begin{align*}
\|\xi-\bth\|_{F_{\beta}(\bth)} 
&\le \|\xi-\bth\|_{F_{\beta}(\bth_{Q})} \sqrt{3+ \omega_{\beta}(|\bth - \bth_{Q}|)} \\ 
&\le \e'\sqrt{3+ \omega_{\beta}(|\bth - \bth_{Q}|)}\\
&\le \e\sqrt{d(3+ \omega_{\beta}(5\sqrt{d}\delta))}\\
& \le \e\sqrt{d(3+5\sqrt{d}L )}\,.
\end{align*}
where in the last step we have used
\begin{equation}\label{eq:stimaomegabeta}
\omega_{\beta}(5\sqrt{d}\delta) = 5\sqrt{d}L\e^{3\zeta -2} \le 5\sqrt{d}L\,,
\end{equation}
with the last inequality following from the assumption $\zeta \ge 2/3$.

The previous estimate shows that there exists a covering of $Q$ by means of at most $k_{Q}$ balls of the form $B_{j} = B_{\beta,\e''}(\theta_{j})$, with $j=1,\dots,k_{Q}$. Therefore, by combining \eqref{eq:beta}, \eqref{eq:ktildestima} and \eqref{eq:epssec}, we get
\begin{align*}
k_{Q} &\le \tilde{k} \le c_d \prod_{i=1}^d \left\lceil \sqrt{\frac{ \delta^{2}\lambda_i(\bth_Q)}{\e^{2}}} \right\rceil \\
& \le  c_d \prod_{i=1}^d \left\lceil \sqrt{\delta^2 d(3+5\sqrt{d}L)\lambda_i(\bth_Q)/(\e'')^2}\right\rceil \\
&\le \, c_d (d(3+ 5\sqrt{d}L))^{\frac{d}{2}} |Q| \prod_{i=1}^d \left((\e'')^{-1}\sqrt{\lambda_i(\bth_Q)} + \delta^{-1}\right)\\
&\le \, c_d (d(3+5\sqrt{d}L))^{\frac{d}{2}} \e^{-\zeta d} |Q| \prod_{i=1}^d \left( \frac{\e^{\zeta }}{\e''}\sqrt{\lambda_i(\bth_Q)} +1\right)\\
&\le \, c_d (d(3+ 5\sqrt{d}L))^{\frac{d}{2}} \left(\frac{\e''}{d(3+ 5\sqrt{d}L)}\right)^{-\zeta d} |Q| \prod_{i=1}^d \left( \frac{(\e'')^{\zeta - 1}}{(d(3+ 5\sqrt{d}L))^{\zeta}}\sqrt{\lambda_i(\bth_Q)} +1\right)\\
&\le \, c_d (d(3+ 5\sqrt{d}L))^{\frac{d}{2} + \zeta d} \left(\e''\right)^{-\zeta d} |Q| \prod_{i=1}^d \left( (\e'')^{\zeta - 1}\sqrt{\lambda_i(\bth_Q)} +1\right)\\
&\le \, c_d (d(3+ 5\sqrt{d}L))^{\frac{3}{2}d} \left(\e''\right)^{-\zeta d} |Q| \prod_{i=1}^d \left( (\e'')^{\zeta - 1}\sqrt{\lambda_i(\bth_Q)} +1\right)
\,,
\end{align*}
 where we have used that $\lceil xy\rceil \le xy+1$ for all $x,y\ge0$. 
Therefore, by conveniently writing $\e$ instead of $\e''$, we obtain that the number $k_{Q}$ of $\beta$-Fisher balls of size $\e$ that are needed to cover $Q$ satisfies
\begin{align*}
k_{Q}  
\le C_{d}\, \e^{-\zeta d} |Q| \prod_{i=1}^d \left(1+\e^{\zeta-1}\sqrt{\lambda_i(\bth_Q)} \right)
= C_{d}  \, \e^{-\zeta d} |Q| \det \left(I+\e^{\zeta-1}F(\bth_Q)^{\frac{1}{2}} \right)\,,
\end{align*}
where now
\begin{equation}\label{eq:newbeta}
\beta = \left(\frac{\e^{2}}{d(3+5L\sqrt{d})}\right)^{1-\zeta}
\end{equation}
and $C_{d} = c_d d^{3d/2} (3 + 5\sqrt{d}L)^{3d/2}$. Finally, if we denote by $k(\e)$ the cardinality of the least number of $\beta$-Fisher balls of size $\e$ that are needed to cover $\Theta$, by summing over $Q$ and choosing $\bth_Q$ as a minimum point for $\det (I+\e^{\zeta-1}F(\bth)^{\frac{1}{2}} )$ when $\bth\in Q$, we obtain 
\begin{align*}
k(\e) \le C_{d}\, \e^{-\zeta d} \int_{\Theta} \det\big(I+\e^{\zeta-1}F(\bth)^{\frac{1}{2}}\big)\, d\bth 
= C_{d}\ \e^{-d_{\zeta}(\e)}\,.
\end{align*}
Since $\|\cdot \|_{F_{\beta}(\bth)}\ge \|\cdot \|_{F(\bth)}$ for all $\beta>0$ and $\bth\in \Theta$, we finally obtain that $k(\e)$ is also an upper bound for the covering number associated with Fisher balls, and this concludes the proof.
\end{proof}
The following result exploits the link between the generalization bound and the covering bound proved in Lemma \ref{lemma1}.
\begin{lemma}\label{lemma2}
Under the assumption of Theorem \ref{mainteo}, there exist $\e_{0},C,K>0$ such that
for all $\e\in (0,\e_{0})$ we have
\begin{equation}\label{eq:Pgenbound}
\P \left\{ \sup_{\bth \in\Theta} |R(\bth) - R_n(\bth)|\ge C\e \right\}\le 
4\, k(\e)\, \exp\left(-Kn\e^{8/3}\right) \, ,
\end{equation}
where $k(\e)$ is a bound on the cardinality of a covering by Fisher balls of radius $\e$.
\end{lemma}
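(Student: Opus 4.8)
The plan is to turn the supremum over $\Theta$ into a finite union bound over the centers of the covering from Lemma~\ref{lemma1}, and to control separately the oscillation of $R$ and of $R_n$ inside each covering ball. By (the proof of) Lemma~\ref{lemma1} there are centers $\bth_1,\dots,\bth_N$ with $N\le k(\e)$ such that the $\beta$-Fisher balls $B_j:=B_{\beta,\e}(\bth_j)=\{\bth:\|\bth-\bth_j\|_{F_\beta(\bth_j)}<\e\}$ already cover $\Theta$, where $\beta=\beta(\e)\asymp\e^{2(1-\zeta)}$; the decisive point is that on $B_j$ one controls \emph{simultaneously} the Fisher displacement $\|\bth-\bth_j\|_{F(\bth_j)}\le\|\bth-\bth_j\|_{F_\beta(\bth_j)}<\e$ and the Euclidean displacement $|\bth-\bth_j|^2\le\e^2/\beta\asymp\e^{2\zeta}$. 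For $\bth\in\Theta$, pick $j$ with $\bth\in B_j$ and split
\[
|R(\bth)-R_n(\bth)|\le |R(\bth)-R(\bth_j)| + |R_n(\bth_j)-R(\bth_j)| + \bigl|(R_n(\bth)-R(\bth))-(R_n(\bth_j)-R(\bth_j))\bigr|.
\]

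For the first, deterministic, term: by (iv), $|R(\bth)-R(\bth_j)|\le\Lambda\,\E_{(x,y)\sim p}|p_{\bth}(y|x)-p_{\bth_j}(y|x)|$; Taylor-expanding $p_\bth(y|x)$ at $\bth_j$ via the $C^{1,1}$ bound (i), the first-order part equals $\E_p[p_{\bth_j}(y|x)\,|\langle\nabla_{\bth}l_{\bth_j}(x,y),\bth-\bth_j\rangle|]$, which by Cauchy--Schwarz and (ii) (which makes $p$ and $p_{\bth_j}$ mutually comparable) is $\le C\|\bth-\bth_j\|_{F(\bth_j)}<C\e$, while the quadratic remainder is $O(|\bth-\bth_j|^2)=O(\e^{2\zeta})\le\e$ since $\zeta\ge 2/3>\tfrac12$; hence $\sup_{\bth\in B_j}|R(\bth)-R(\bth_j)|\le C_1\e$, with $C_1$ explicit in $\Lambda,\alpha_1,\alpha_2$ and the $C^{1,1}$ modulus. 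For the second term, $0\le\cL\le 2b$ by (iv) makes $R_n(\bth_j)$ an average of $n$ i.i.d.\ variables in $[0,2b]$ with mean $R(\bth_j)$, so Hoeffding gives $\P(|R_n(\bth_j)-R(\bth_j)|\ge c\e)\le 2\exp(-cn\e^2/b^2)$; a union bound over the $N\le k(\e)$ centers costs the factor $k(\e)$.

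The delicate part is the third term, equal to $\bigl|\tfrac1n\sum_i(g_i(\bth)-\E g_i(\bth))\bigr|$ with $g_i(\bth):=\cL(p_{\bth}(Y_i|X_i),p(Y_i|X_i))-\cL(p_{\bth_j}(Y_i|X_i),p(Y_i|X_i))$, a centered empirical average that must be made $O(\e)$ \emph{uniformly} in $\bth\in B_j$. By (i) and (iv), $\bth\mapsto g_i(\bth)$ is Lipschitz with a uniform constant and $|g_i(\bth)|\le C|\bth-\bth_j|\le C'\e^{\zeta}$ on $B_j$; covering $B_j$ (Euclidean radius $\asymp\e^{\zeta}$) by $O(\e^{-d(1-\zeta)})$ Euclidean balls of radius $\asymp\e$, applying Hoeffding (summands of range $\asymp\e^{\zeta}$) at each net point and using the Lipschitz bound for interpolation, yields
\[
\P\Bigl(\sup_{\bth\in B_j}\bigl|\tfrac1n\textstyle\sum_i(g_i(\bth)-\E g_i(\bth))\bigr|\ge c\e\Bigr)\le C_d'\,\e^{-d(1-\zeta)}\exp\!\bigl(-c\,n\,\e^{2-2\zeta}\bigr),
\]
and we union over the $k(\e)$ balls.

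Collecting the two bad events, choose $C:=C_1+2c$: then $\{\sup_{\bth}|R(\bth)-R_n(\bth)|\ge C\e\}$ is contained in their union, so its probability is at most $2k(\e)\exp(-cn\e^2/b^2)+C_d'\,k(\e)\,\e^{-d(1-\zeta)}\exp(-c\,n\,\e^{2-2\zeta})$. Since $\zeta\ge2/3$ gives $0<2-2\zeta\le 2/3<8/3$, for $0<\e<1$ we have $\e^2\ge\e^{8/3}$ and $\e^{2-2\zeta}\ge\e^{8/3}$; taking $\e_0$ small enough that the polynomial prefactor $\e^{-d(1-\zeta)}$ is absorbed into $\exp(\tfrac12 c\,n\,\e^{2-2\zeta})$ whenever that exponential is $<1$, both terms are $\le 2k(\e)\exp(-Kn\e^{8/3})$ for a suitable $K>0$, which gives \eqref{eq:Pgenbound} with the factor $4$. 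The main obstacle is exactly the uniform empirical oscillation: a $\beta$-Fisher ball of Fisher-radius $\e$ has Euclidean diameter $\asymp\e^{\zeta}\gg\e$, so a crude Lipschitz estimate of $R_n$ over the ball only yields an $O(\e^{\zeta})$ bound, far too weak; one is forced to split the increment into a Fisher-controlled first-order part and a quadratic remainder controlled by the Euclidean size of the \emph{regularized} ball, and to check that the per-ball net cardinality $O(\e^{-d(1-\zeta)})$ is absorbed — this bookkeeping is precisely where the hypothesis $\zeta\ge 2/3$ and the exponent $8/3$ come from.
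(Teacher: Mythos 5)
Your argument follows the same skeleton as the paper's proof (cover $\Theta$ by the $\beta$-Fisher balls of Lemma \ref{lemma1}, union-bound over the $k(\e)$ centers, apply Hoeffding at each center, and control the oscillation over each ball), but the two oscillation terms are handled by genuinely different means. For the deterministic term $|R(\bth)-R(\bth_j)|$ you Taylor-expand $p_{\bth}$ at the center, bounding the linear part by Cauchy--Schwarz against the Fisher form at $\bth_j$ and the quadratic remainder by the Euclidean radius $\e^{\zeta}$ of the regularized ball (using $2\zeta>1$); the paper instead integrates along the segment and compares $F_{\beta}$ at nearby points via Lemma \ref{lem:lip2crucial}. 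Both work. For the uniform empirical oscillation you introduce a secondary Euclidean net of each ball, of cardinality $O(\e^{-d(1-\zeta)})$, apply Hoeffding at each net point (summands of range $\asymp\e^{\zeta}$, giving exponent $n\e^{2-2\zeta}$) and interpolate by the uniform Lipschitz bound on $g_i$; the paper instead writes the increment as an integral of $\bigl(\tfrac1n\sum_i Z_i(t)\bigr)^{1/2}$ and applies Hoeffding once. Your version is more explicit about uniformity in $\bth$ over the ball, at the price of the extra polynomial prefactor $\e^{-d(1-\zeta)}$.

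That prefactor is where the one real problem sits. As written, ``taking $\e_0$ small enough that $\e^{-d(1-\zeta)}$ is absorbed into $\exp(\tfrac12 c\,n\,\e^{2-2\zeta})$'' is not a valid step: for fixed small $n$ (say $n=1$) the exponential stays bounded while $\e^{-d(1-\zeta)}\to\infty$ as $\e\to 0$, so no choice of $\e_0$ alone achieves the absorption uniformly in $n$. The conclusion survives, but only through a dichotomy you should state explicitly: either $\tfrac12 c\,n\,\e^{2-2\zeta}\ge d(1-\zeta)\log(1/\e)$, in which case the prefactor is indeed absorbed and, since $2-2\zeta\le 2/3\le 8/3$, one lands on $\exp(-Kn\e^{8/3})$; or $n<\tfrac{2d(1-\zeta)}{c}\,\e^{2\zeta-2}\log(1/\e)$, in which case $Kn\e^{8/3}\lesssim \e^{2\zeta+2/3}\log(1/\e)$, and since $2\zeta+2/3\ge 2$ this is at most $\log 4$ for all $\e<\e_0$ with $\e_0$ small, so the claimed right-hand side $4\,k(\e)\exp(-Kn\e^{8/3})$ is at least $1$ and the inequality is vacuously true. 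With that case distinction added (and the minor correction that your ``first-order part'' is an upper bound for the increment rather than an equality), the argument is complete.
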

\begin{proof}
As a first step, we need to ``discretize'' the estimate of the left-hand side of \eqref{eq:Pgenbound} at the micro-scale $\e$, using the previous covering lemma (Lemma \ref{lemma1}). By inspecting the proof of Lemma \ref{lemma1}, we notice that we can consider $\beta$-Fisher balls instead of Fisher balls for the covering, where $\beta$ is defined in \eqref{eq:newbeta}. We also recall that the meso-scale $\delta$ is now given by
\[
\delta = \left(\frac{\e}{\sqrt{d(3+5L\sqrt{d})}}\right)^{\zeta}\,.
\]
Since $S_{n}(\bth)= R(\bth) - R_n(\bth)$, for all $\bth_1, \bth_2\in \Theta$ we have 
\begin{equation}\label{eq:S1S2}
|S_{n}(\bth_1) - S_{n}(\bth_2)| \le |R(\bth_1) - R(\bth_2)| + |R_n(\bth_1) - R_n(\bth_2)|\,. 
\end{equation}
Now we estimate each term in the right-hand side of \eqref{eq:S1S2} under the assumption $|\bth_{1}-\bth_{2}|< 5\sqrt{d}\delta$. We set $\bth(t) = t \bth_1 + (1 - t)\bth_2$ for $t\in [0,1]$, and we estimate the first term:
\begin{align*}
|R(\bth_1) - R(\bth_2)|  & \le \int_{\cX\times\cY} \left|\L(p_{\bth_{1}}(y|x)) - \L(p_{\bth_{2}}(y|x)) \right|p(dx, dy)  \\
& \le \int_{\cX\times\cY} \left|\L(p_{\bth(0)}(y|x)) - \L(p_{\bth(1)}(y|x)) \right|p(dx, dy)\\
& = \int_{\cX\times\cY} \left|\int_{0}^{1}\de_{1}\cL(p_{\bth(t)}(y|x), p(y|x))\Langle \nabla_{\bth} p_{\bth(t)}(y|x), \bth_{2}-\bth_{1}\Rangle\, dt \right| p(dx, dy) \\
& \le \int_{0}^{1}  \int_{\cX\times\cY}\left|\de_{1}\cL(p_{\bth(t)}(y|x), p(y|x)) \right| \left|\Langle \nabla_{\bth} p_{\bth(t)}(y|x), \bth_{2}-\bth_{1}\Rangle\right|  p(dx, dy)dt \\
&\le \Lambda \int_{0}^{1} \int_{\cX\times \cY} \left|\Langle \nabla_{\bth}  p_{\bth(t)}(x,y), \bth_{2}-\bth_{1}\Rangle\right|\, p(dx,dy)\, dt \\
& =\Lambda \int_{0}^{1} \int_{\cX\times \cY} \left|\Langle \nabla_{\bth} \log p_{\bth(t)}(x,y), \bth_{2}-\bth_{1}\Rangle\right|\, p_{\bth(t)}(x,y)\, p(dx,dy)\, dt\\
& =\Lambda \int_{0}^{1} \int_{\cX\times \cY} \left|\Langle \nabla_{\bth} \log p_{\bth(t)}(x,y), \bth_{2}-\bth_{1}\Rangle\right|\, p(x,y)\, p_{\bth(t)}(dx,dy)\, dt\,,\end{align*}
where we have used the fundamental theorem of calculus, Fubini's theorem, the $\Lambda$-Lipschitzianity of $\cL$, and the fact that $\nabla_{\bth}\log p_{\bth}(y|x) = \nabla_{\bth}\log p_{\bth}(x,y)$. Then, by Cauchy-Schwarz inequality, we obtain
\begin{align*}
|R(\bth_1) - R(\bth_2)| 
&\le \Lambda \int_{0}^{1} \E_{p_{\bth(t)}}[p^{2}(x,y)]^{1/2}\cdot \Langle F(\bth(t)) (\bth_{2}-\bth_{1}),\bth_{2}-\bth_{1}\Rangle^{1/2}\, dt\\
&\le \Lambda \int_{0}^{1} \E_{p_{\bth(t)}}[p^{2}(x,y)]^{1/2}\cdot \Langle F_{\beta}(\bth(t)) (\bth_{2}-\bth_{1}),\bth_{2}-\bth_{1}\Rangle^{1/2}\, dt \\
&\le \Lambda C_1 \int_{0}^{1} \Langle F_{\beta}(\bth(t)) (\bth_{2}-\bth_{1}),\bth_{2}-\bth_{1}\Rangle^{1/2}\, dt\,,
\end{align*}
for some constant $C_1>0$ depending on $\alpha_{1},\alpha_{2}$, thanks to hypothesis (ii). Now, Lemma \ref{lem:lip2crucial} implies that 
\begin{equation}\label{eq:l2c1}
\Langle F_{\beta}(\bth(t)) (\bth_{2}-\bth_{1}),\bth_{2}-\bth_{1}\Rangle \le \big(3+\omega_{\beta}(t |\bth_{2}-\bth_{1}|)\big) \|\bth_{2}-\bth_{1}\|_{F_{\beta}(\bth_{1})}\,.
\end{equation}
By \eqref{eq:l2c1} and \eqref{eq:stimaomegabeta} we conclude that
\begin{align}
\nonumber
|R(\bth_1) - R(\bth_2)| &\le \Lambda C_{1} \int_{0}^{1}\big(3+\omega_{\beta}(t |\bth_{2}-\bth_{1}|)\big)^{1/2}\, dt \  \|\bth_{2}-\bth_{1}\|_{F_{\beta}(\bth_{1})}\\\label{eq:R1R2}
&\le C_{2} \|\bth_{2}-\bth_{1}\|_{F_{\beta}(\bth_{1})}\,,
\end{align}
where $C_{2}$ is a constant depending only on $\Lambda, C_{1}$ and the dimension $d$. 

Now, by a similar computation, we estimate the second term in the r.h.s. of \eqref{eq:S1S2}:
\begin{align}
\nonumber|R_{n}(\bth_{1}) - R_{n}(\bth_{2})| &\le \Lambda \int_{0}^{1} \left(\frac{1}{n}\sum_{i=1}^{n} \Langle \nabla \log p_{\bth(t)}(X_{i},Y_{i}), \bth_{2}-\bth_{1}\Rangle^{2}\, \frac{p_{\bth(t)}(X_{i},Y_{i})}{p(X_{i},Y_{i})}\right)^{1/2}\\\nonumber
&\qquad\qquad\qquad \cdot \left(\frac{1}{n}\sum_{i=1}^{n} p_{\bth(t)}(X_{i},Y_{i}) p(X_{i},Y_{i})\right)^{1/2}\, dt\\
&\le \alpha_{2} \int_{0}^{1} \left(\frac{1}{n}\sum_{i=1}^{n} \Langle \nabla \log p_{\bth(t)}(X_{i},Y_{i}), \bth_{2}-\bth_{1}\Rangle^{2}\, \frac{p_{\bth(t)}(X_{i},Y_{i})}{p(X_{i},Y_{i})}\right)^{1/2}\, dt \label{eq:Rn1Rn2}\,.
\end{align}
Let us set 
\[
Z_{i}(t) := \Langle \nabla \log p_{\bth(t)}(X_{i},Y_{i}), \bth_{2}-\bth_{1}\Rangle^{2}\, \frac{p_{\bth(t)}(X_{i},Y_{i})}{p(X_{i},Y_{i})}
\]
and 
\begin{align*}
T & := \sup\ \frac{p_{\bth}(x,y)}{p(x,y)}|\nabla_{\bth} \log p_{\bth}(x,y)|^{2}\,,
\end{align*}
where the supremum is computed w.r.t $(x,y)\in \cX\times \cY$ and $\bth\in \Theta$. By assumptions (i) and (ii) we obtain
\[
T \le B \sup |\nabla_{\bth} \log p_{\bth}(x,y)|^{2} < \infty\,,
\]
where $B=\alpha_{2}/\alpha_{1}$. Thus we also get
\[
0\le Z_{i}(t)\le T|\bth_{2}-\bth_{1}|^2\,.
\] 
The expectation of $Z_{i}(t)$ is
\begin{align*}
\overline{Z}_{i}(t) &= \E_{(x,y)\sim p}[Z_{i}(t)] = \int \Langle \nabla \log p_{\bth(t)}(x,y), \bth_{2}-\bth_{1}\Rangle^{2}\, \frac{p_{\bth(t)}(x,y)}{p(x,y)}\, p(dx,dy)\\
&= \int \Langle \nabla \log p_{\bth(t)}(x,y), \bth_{2}-\bth_{1}\Rangle^{2}\, p_{\bth(t)}(dx,dy)\\
&= \Langle F(\bth(t)) (\bth_{2}-\bth_{1}),\bth_{2}-\bth_{1} \Rangle
\end{align*}
hence also $\frac{1}{n}\sum_{i=1}^{n} Z_{i}(t)$ has the same expectation, by independence of the $Z_{i}(t)$. 

Now, from Lemma \ref{lemma1} we know that $\Theta$ can be covered with $k = k(\e) \le C_{d}\e^{-d_{\zeta}(\e)}$ $\beta$-Fisher balls $B_1,\dots, B_k$ of size $\e$. Let now $\eta = C\e$ for some $C>0$ to be chosen later, and evaluate
\[
\P\left\{\sup_{\bth\in\Theta} |S_{n}(\bth)| \ge \eta\right\} \le \P\left\{\bigcup_{j=1}^k \sup_{\bth\in B_j} |S_{n}(\bth)| \ge \eta \right\} \le \sum_{j=1}^k \P\left\{\sup_{\bth\in B_j} |S_{n}(\bth)| \ge \eta\right\}\,.
\]
Now for all $j=1,\dots, k$ we bound the probability of an event involving the computation of the supremum of $|S_{n}(\bth)|$ over $B_j$ with another one involving only the pointwise evaluation of $S_{n}$ at the center $\bth_{j}$ of $B_{j}$. Indeed by \eqref{eq:R1R2} and \eqref{eq:Rn1Rn2}, and with $\bth,\bth_{j}$ respectively replacing $\bth_{2},\bth_{1}$, we deduce 
\begin{align*}
&\P\left\{\sup_{\bth\in B_j} |S_{n}(\bth)| \ge \eta\right\} \\
&\hspace{10mm}\le \P\left\{|S_{n}(\bth_j)| + \sup_{\bth\in B_j}\big(|S_{n}(\bth) - S_{n}(\bth_{j})|\big)\ge \eta\right\}\\
&\hspace{10mm}\le \P\left\{|S_{n}(\bth_j)| + \sup_{\bth\in B_j}\left(C_{2}\|\bth-\bth_{j}\|_{F_{\beta}(\bth_{j})} + \alpha_{2}\int_{0}^{1}\left(\frac{1}{n}\sum_{i=1}^{n} Z_{i}(t)\right)^{1/2}\, dt\right)\ge \eta\right\}\\
&\hspace{10mm}\le \P\left\{|S_{n}(\bth_j)| \ge \frac{\eta}{2}\right\} + \P \left\{\exists\, t\in [0,1]:\ \frac{1}{n}\sum_{i=1}^{n} Z_{i}(t) \ge \frac{\eta^{2}}{16\,\alpha_{2}^{2}}\right\}\,,
\end{align*}
where in the last inequality we have used $\|\bth-\bth_{j}\|_{F_{\beta}(\bth_{j})}<  \e = \frac{\eta}{C}$ and required $C\ge 4C_{2}$. Owing to Lemma \ref{thm:Hoeffding} and (v), we get
\begin{equation}\label{eq:hoeff1}
\P\left(|S_{n}(\bth_j)|\ge \frac{\eta}{2}\right) = \P\left(|R_{n}(\bth_j)- R(\bth_j)| \ge\frac{\eta}{2}\right) \le 2 \exp\left(-\frac{n\eta^2}{2 b^2}\right)\,. 
\end{equation}
and
\begin{equation}\label{eq:hoeff2}
\P\left\{\left|\frac{1}{n}\sum_{i=1}^{n}Z_{i}(t) - \|\bth-\bth_{j}\|^{2}_{F_{\beta}(\bth(t))}\right|\ge \xi \right\} \le 2\exp\left(-\frac{2n\xi^{2}}{T^{2}|\bth-\bth_{j}|^{2}}\right)\,.
\end{equation}
By \eqref{eq:hoeff2} we find
\begin{align}\nonumber
\P \biggl\{\exists\, t\in [0,1]& :\ \frac{1}{n}\sum_{i=1}^{n} Z_{i}(t) \ge \frac{\eta^{2}}{16\,\alpha_{2}^{2}}\biggr\}\\\nonumber
&\le\ \P\biggl\{\|\bth-\bth_{j}\|^{2}_{F_{\beta}(\bth(t))} \ge \frac{\eta^{2}}{32\, \alpha_{2}^{2}}\biggr\} \\\nonumber
&\qquad\qquad\qquad + \P\biggl\{\frac{1}{n}\sum_{i=1}^{n} Z_{i}(t) - \|\bth-\bth_{j}\|^{2}_{F_{\beta}(\bth(t))} \ge \frac{\eta^{2}}{32\, \alpha_{2}^{2}}\biggr\}\\\nonumber
&\le \P\biggl\{\|\bth-\bth_{j}\|^{2}_{F_{\beta}(\bth(t))} \ge \frac{\eta^{2}}{32\, \alpha_{2}^{2}}\biggr\} + 2\exp\left(-\frac{n\eta^{4}}{2^{9}\, \alpha_{2}^{4}T^{2}|\bth-\bth_{j}|^{2}}\right)\\\label{eq:hoeff3}
&\le 2\exp\left(-C_{4} n \eta^{4-2\zeta}\right)\,,
\end{align}
where 
\[
C_{4} = \frac{C_{2}^{2\zeta}}{3^{2}2^{9-2\zeta}\alpha_{2}^{4}T^{2}d}
\]
and where the last inequality follows from  
\begin{equation}\label{eq:delirio}
\|\bth-\bth_{j}\|^{2}_{F_{\beta}(\bth(t))} < \frac{\eta^{2}}{32\, \alpha_{2}^{2}}\,,
\end{equation}
which can be enforced by a suitable choice of the constant $C$, as explained hereafter. Indeed, using Lemma \ref{lem:lip2crucial} and \eqref{eq:stimaomegabeta} we obtain
\begin{equation*}
\begin{split}
\|\bth-\bth_{j}\|^{2}_{F_{\beta}(\bth(t))} &\le  (3+\omega_{\beta}(t|\bth - \bth_j|))\|\bth - \bth_{j}\|^2_{F_\beta(\bth_j)}\\
& \le \big(3 + \omega_{\beta}(t|\bth - \bth_j|)\big)\, \e^{2}\\
& \le (3+5L\sqrt{d})\e^{2} \le \frac{3+5L\sqrt{d}}{C^{2}}\eta^{2}\,.
\end{split}
\label{lastine}
\end{equation*}
Therefore, if we choose $C$ such that
\[
(3+5L\sqrt{d}) < \frac{C^{2}}{32\alpha_{2}^{2}}\,,
\]
we obtain \eqref{eq:delirio}, as wanted. The proof of \eqref{eq:hoeff3} is now complete.

Finally, by \eqref{eq:hoeff1} and \eqref{eq:hoeff3}, and observing that the second exponential is the leading term, we get 
\begin{align*}
    \P\left(\sup_{\bth\in\Theta} |S_{n}(\bth)| \ge \eta\right) &\le \sum_{i=1}^k \P\left(\sup_{\bth\in B_i} |S_{n}(\bth)| \ge \eta\right)\\
    & \le 2 \, k(\e) \left[\exp\left(-\frac{n\eta^2}{2 b^2}\right) + \exp\left(-C_{4} n\eta^{4-2\zeta}\right)\right]\\
    &\le 4\, k(\e) \ \exp\left(-C_{5}n\eta^{8/3}\right)\,,
\end{align*}
where $C_{5} = \min(C_{4},(2b^{2})^{-1})$ and owing to $4-2\zeta <3$, which follows from assumption (v). In conclusion we obtain \eqref{eq:Pgenbound} with $K = C_{5}C^{8/3}$.
\end{proof}

\begin{proof}[Proof of Theorem \ref{mainteo}]
We choose $\gamma>0$, and let $\e_{n} = \left(\frac{\log n}{\gamma n}\right)^{3/8}$ and $K$ be as in Lemma \ref{lemma2}. By combining Lemma \ref{lemma1} with Lemma \ref{lemma2} we obtain
\begin{align*}
\P \Biggl( \sup_{\bth \in\Theta} |R(\bth) - R_n(\bth)|\ge C\e_{n} \Biggr) &\le 4\,k(\e_{n})\exp\left(-K\frac{\log n}{\gamma}\right)\\
&\le H \e_{n}^{-d_{\zeta}(\e_{n})} n^{-\frac{K}{\gamma}}\,,
\end{align*}
where $C$ is as in Lemma \ref{lemma2} and $H = 4C_{d}$.
\end{proof}

We can now explain Remark \ref{remarkconvgen} by noting that, if we choose 
\begin{equation}\label{eq:gammazero}
0<\gamma <\gamma_{0} :=  \frac{8K}{3d(1+\log(1+\mu^{1/2}))}\,, 
\end{equation}
with $K$ as in Lemma \ref{lemma2}, then the generalization bound becomes infinitesimal as $n\to +\infty$. Indeed,  by the upper estimate \eqref{eq:stimazeta} we have
\[
d_{\zeta}(\e) \le \zeta d + \hat{r}\left(1-\zeta + \frac{\log(1+\mu^{1/2})}{|\log(\e)|}\right) \le d(1+\log(1+\mu^{1/2})) =: \bar{d}\,,
\]
whenever $\e<\exp(-1)$, so that 
\begin{equation}\label{eq:infinitesimality}
\e_{n}^{-d_{\zeta}(\e_{n})} n^{-\frac{K}{\gamma}} = \left(\frac{\gamma n}{\log n}\right)^{3 d_{\zeta}(\e_{n})/8} n^{-\frac{K}{\gamma}} \le \gamma^{3\bar{d}/8} n^{3\bar{d}/8 - K/\gamma}\,.
\end{equation}
Hence, the infinitesimality of the generalization bound as $n\to\infty$ follows from $3\bar{d}/8 - K/\gamma <0$, as wanted.
\medskip

We recall Hoeffding's estimate, which is used in the proof of Lemma \ref{lemma2}.
\begin{lemma}[Hoeffding's estimate]\label{thm:Hoeffding}
Let $Z_{i}$, $i=1,\dots,n$, be independent random variables, such that $Z_{i}\in [a,b]$ almost surely. Define $V_{n} = \frac{1}{n}\sum_{i=1}^{n} Z_{i}$ and take $\e>0$, then
\[
\P\left(\big|V_{n} - \E[V_{n}]\big|\ge \e\right) \le 2 \exp\left(-\frac{2n\e^{2}}{(b-a)^{2}}\right)\,.
\]
\end{lemma}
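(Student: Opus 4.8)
The plan is to use the classical Chernoff (exponential moment) method combined with a moment generating function bound for bounded centered random variables. First I would center the variables: setting $W_{i} := Z_{i} - \E[Z_{i}]$, we have $\E[W_{i}] = 0$, each $W_{i}$ takes values in an interval of length $b-a$, and $V_{n} - \E[V_{n}] = \frac{1}{n}\sum_{i=1}^{n} W_{i}$. It therefore suffices to bound $\P\!\left(\sum_{i=1}^{n} W_{i} \ge n\e\right)$; applying the same argument to $-W_{i}$ and adding the two estimates yields the two-sided bound with the extra factor $2$.

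For the one-sided bound, for any $\lambda > 0$ Markov's inequality applied to $e^{\lambda \sum_{i} W_{i}}$ gives
\[
\P\!\left(\sum_{i=1}^{n} W_{i} \ge n\e\right) \le e^{-\lambda n\e}\, \E\!\left[e^{\lambda \sum_{i} W_{i}}\right] = e^{-\lambda n\e} \prod_{i=1}^{n} \E\!\left[e^{\lambda W_{i}}\right]\,,
\]
where the factorization uses independence of the $W_{i}$.

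The key step is the Hoeffding lemma on individual moment generating functions: if $W$ is a centered random variable with $W \in [c, c + (b-a)]$ almost surely, then $\E[e^{\lambda W}] \le e^{\lambda^{2}(b-a)^{2}/8}$ for all $\lambda \in \R$. I would prove this by setting $\psi(\lambda) := \log \E[e^{\lambda W}]$, observing $\psi(0) = 0$ and $\psi'(0) = \E[W] = 0$, and showing $\psi''(\lambda) \le (b-a)^{2}/4$: indeed $\psi''(\lambda)$ equals the variance of $W$ under the tilted law $d\mathbb{Q}_{\lambda} \propto e^{\lambda W}\, d\P$, and a random variable supported in an interval of length $b-a$ has variance at most $(b-a)^{2}/4$ (Popoviciu's inequality, or directly $\mathrm{Var}(W) \le \E[(W-m)^{2}] \le (b-a)^{2}/4$ with $m$ the interval midpoint). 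A second-order Taylor expansion of $\psi$ around $0$ then gives $\psi(\lambda) \le \lambda^{2}(b-a)^{2}/8$.

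Combining these ingredients, $\P\!\left(\sum_{i} W_{i} \ge n\e\right) \le \exp\!\left(-\lambda n\e + n\lambda^{2}(b-a)^{2}/8\right)$; minimizing the exponent over $\lambda > 0$ (the optimum is $\lambda = 4\e/(b-a)^{2}$) yields $\exp\!\left(-2n\e^{2}/(b-a)^{2}\right)$. Adding the analogous lower-tail bound produces the factor $2$ and finishes the proof. The only mildly delicate point is the moment generating function estimate; everything else is a routine application of the Chernoff method. Alternatively one could avoid the tilted-measure computation by bounding $e^{\lambda w}$ on $[c,c+(b-a)]$ by its chord, using convexity of the exponential, and estimating the resulting expression by hand — an entirely elementary but more computational route.
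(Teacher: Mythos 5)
Your proof is correct and complete: the Chernoff bound, Hoeffding's lemma on the moment generating function (via the tilted-measure variance bound, with the chord argument as a valid alternative), the optimization $\lambda = 4\e/(b-a)^2$, and the union over the two tails all check out. The paper simply recalls this classical inequality without proof, and your argument is the standard derivation, so there is nothing to compare beyond noting that you have supplied exactly the textbook proof the paper implicitly invokes.
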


\section{Figures}
The appendix contains a comprehensive collection of figures and tables that complement and enhance the understanding of the main content presented in this document. These figures provide visual representations of the results related to the experiments section discussed in the main part of the paper.

The results in Figure \ref{fig:MLPsigma} and Figure \ref{fig:CNNsigma} show that the impact of $\sigma^2$ on $\dinf_{\zeta}$ and  $d_{\zeta}$ is negligible. To avoid discrepancies, we fix the data and the parameters used to estimate both $\dinf_{\zeta}$ and  $d_{\zeta}$ via Monte Carlo integration. The meaningfulness of 2sED and lower 2sED for deterministic deep learning models is then enforced by taking the limit as $\sigma^2 \to 0$. 

\begin{figure}[h!]
\centering
    \includegraphics[width=\columnwidth]{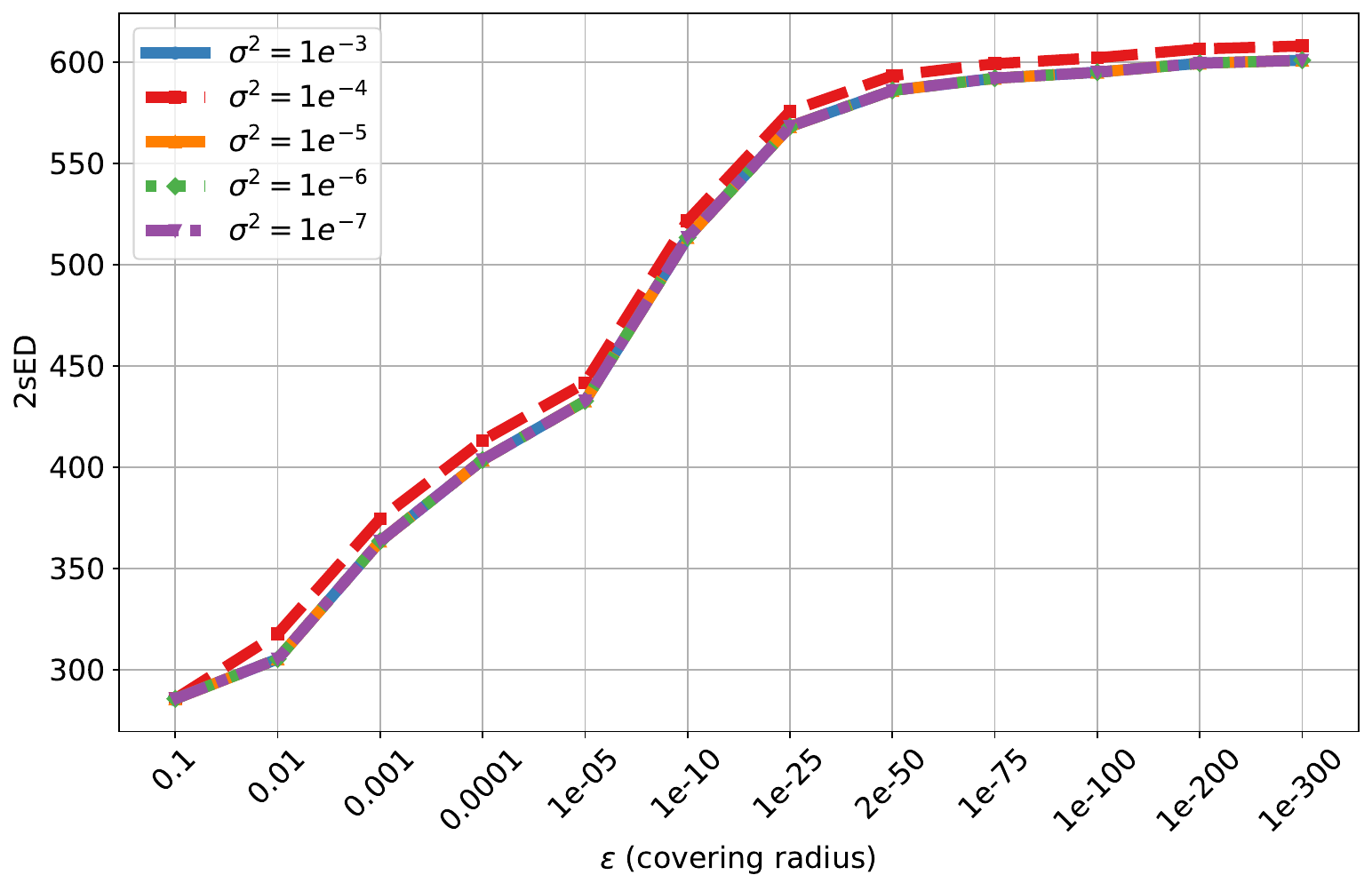}
    \caption{Impact of $\sigma^2$ on $d_{\zeta}$ for $MLP$ $54$-$16$-$7$. The 2sED is estimated with a fixed seed varying $\sigma^2$.}
   \label{fig:MLPsigma}
\end{figure}

\begin{figure}[h!]
 \centering
    \includegraphics[width=\columnwidth]{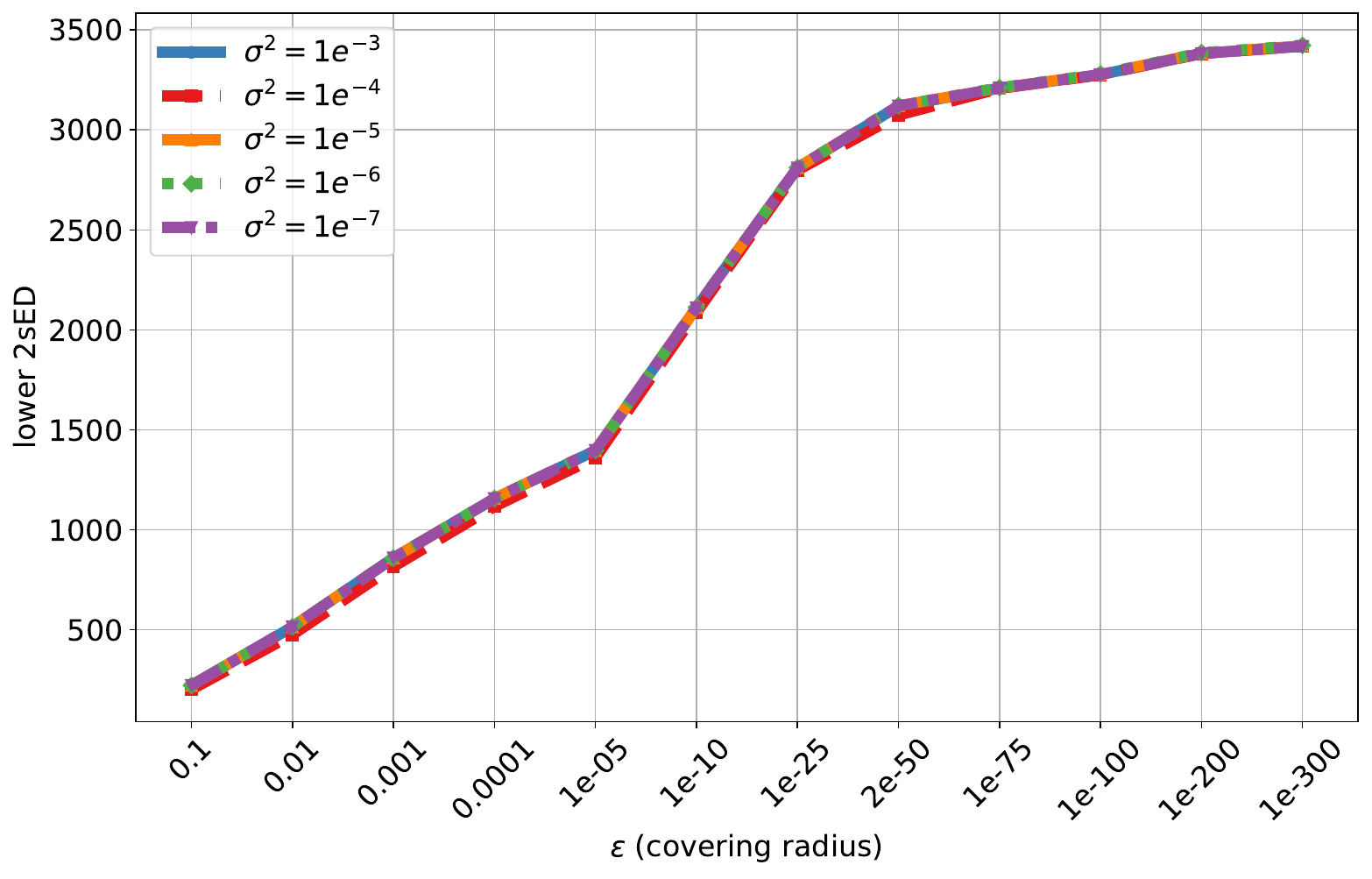}
    \caption{Impact of $\sigma^2$ on $d_{\zeta}$ for $CNN$ $5$-$7\vert10$-$50$-$34$-$10$. The 2sED is estimated with a fixed seed varying $\sigma^2$}
   \label{fig:CNNsigma}
\end{figure}

\begin{table}[!ht]
\caption{Number of parameters of CNNs}
\label{tab:model_parameters}
\vskip 0.15in
\begin{center}
\begin{small}
\begin{sc}
\begin{tabular}{lcccr}
\toprule
Model & Number of Parameters ($d$)  \\
\midrule
 CNN 7-5$|$10-50-34-10 & 4493 \\
 CNN 5-7$|$10-50-34-10 & 4753\\
 CNN 5-4-3$|$10-50-34-10  & 4985 \\
\bottomrule
\end{tabular}
\end{sc}
\end{small}
\end{center}
\vskip -0.1in
\end{table}

 
\begin{figure}[!ht]
\vskip 0.2in
\begin{center}
\centerline{\includegraphics[width=\columnwidth]{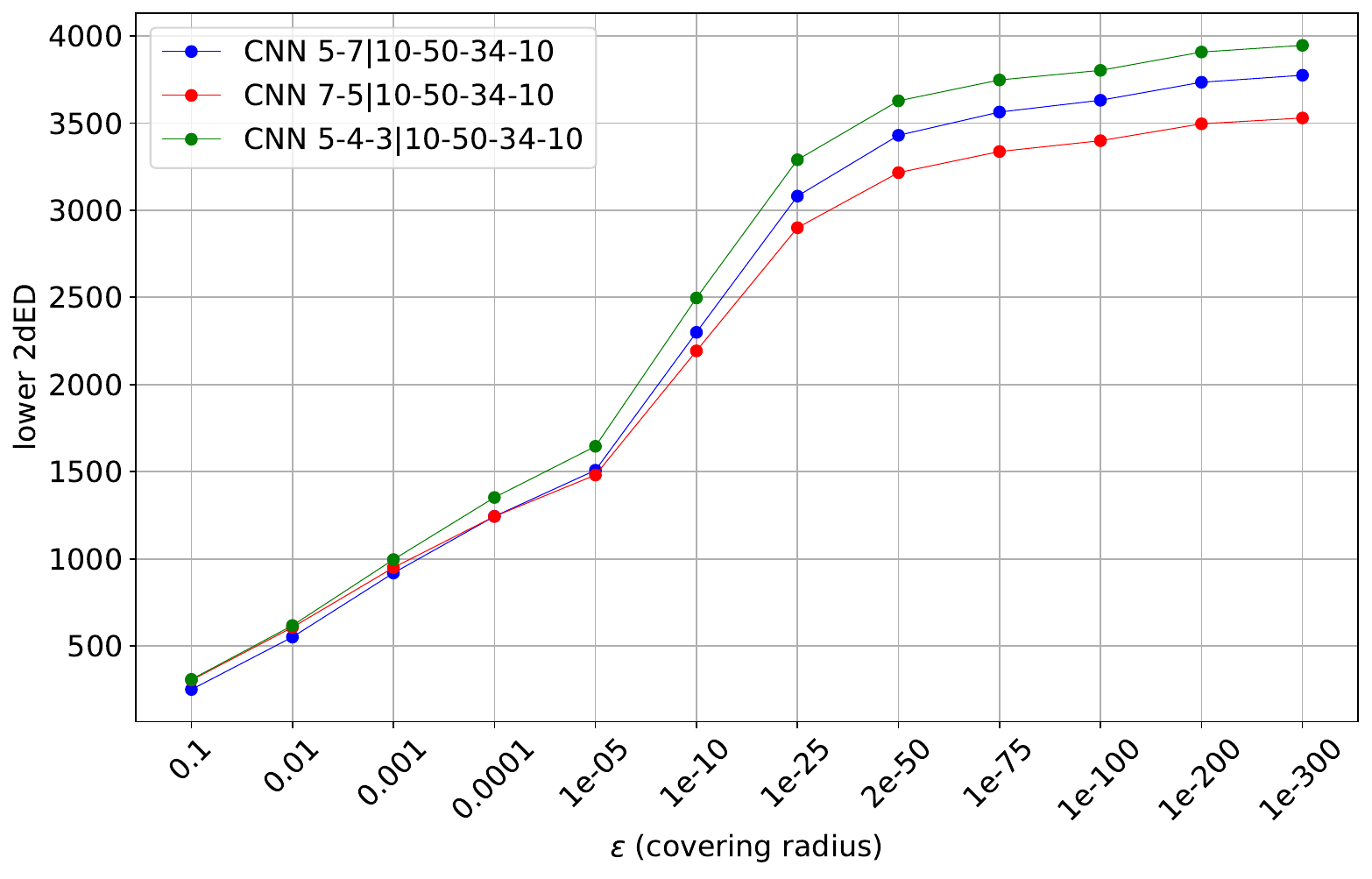}}
\caption{This figure plots the lower 2sED of CNNs estimated using 100 MNIST samples and 100 vectors of parameters for the Monte Carlo approximation.}
\label{fig:MNIST2sED}
\end{center}
\vskip -0.2in
\end{figure}
 
 \begin{figure}[!ht]
\vskip 0.2in
\begin{center}
\centerline{\includegraphics[width=\columnwidth]{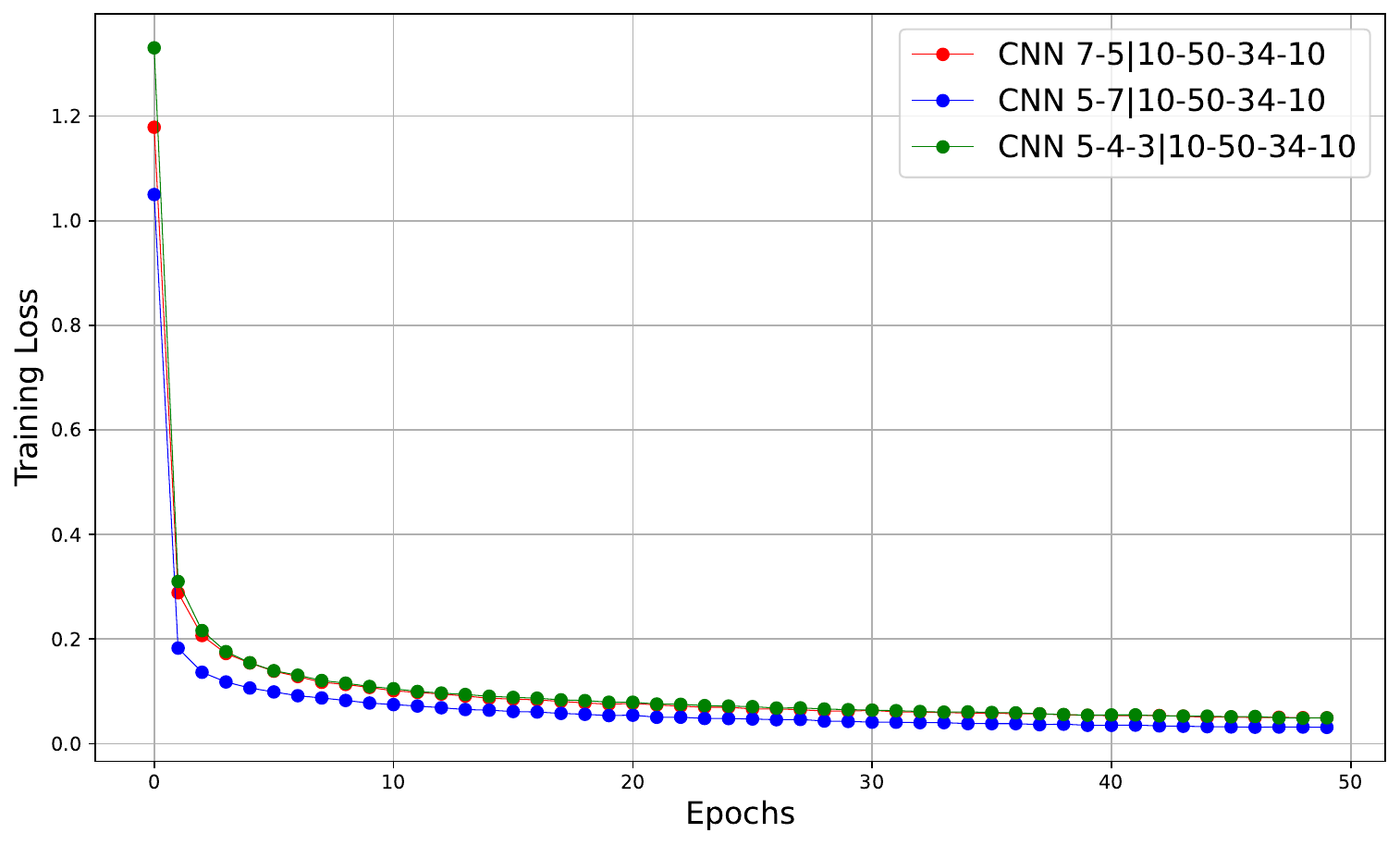}}
\caption{Training loss plots of CNNs  on MNIST using Adam  with learning rate $1e^{-3}$ and a batch size $256$.}
\label{fig:mnist256}
\end{center}
\vskip -0.2in
\end{figure}
 \begin{figure}[!ht]
\vskip 0.2in
\begin{center}
\centerline{\includegraphics[width=\columnwidth]{trainings_mnist_all_bs256.pdf}}
\caption{Training loss plots of CNNs  on MNIST using Adam  with learning rate $1e^{-3}$ and a batch size $512$.}
\label{fig:mnist512}
\end{center}
\vskip -0.2in
\end{figure}

 \begin{figure}[!ht]
\vskip 0.2in
\begin{center}
\centerline{\includegraphics[width=\columnwidth]{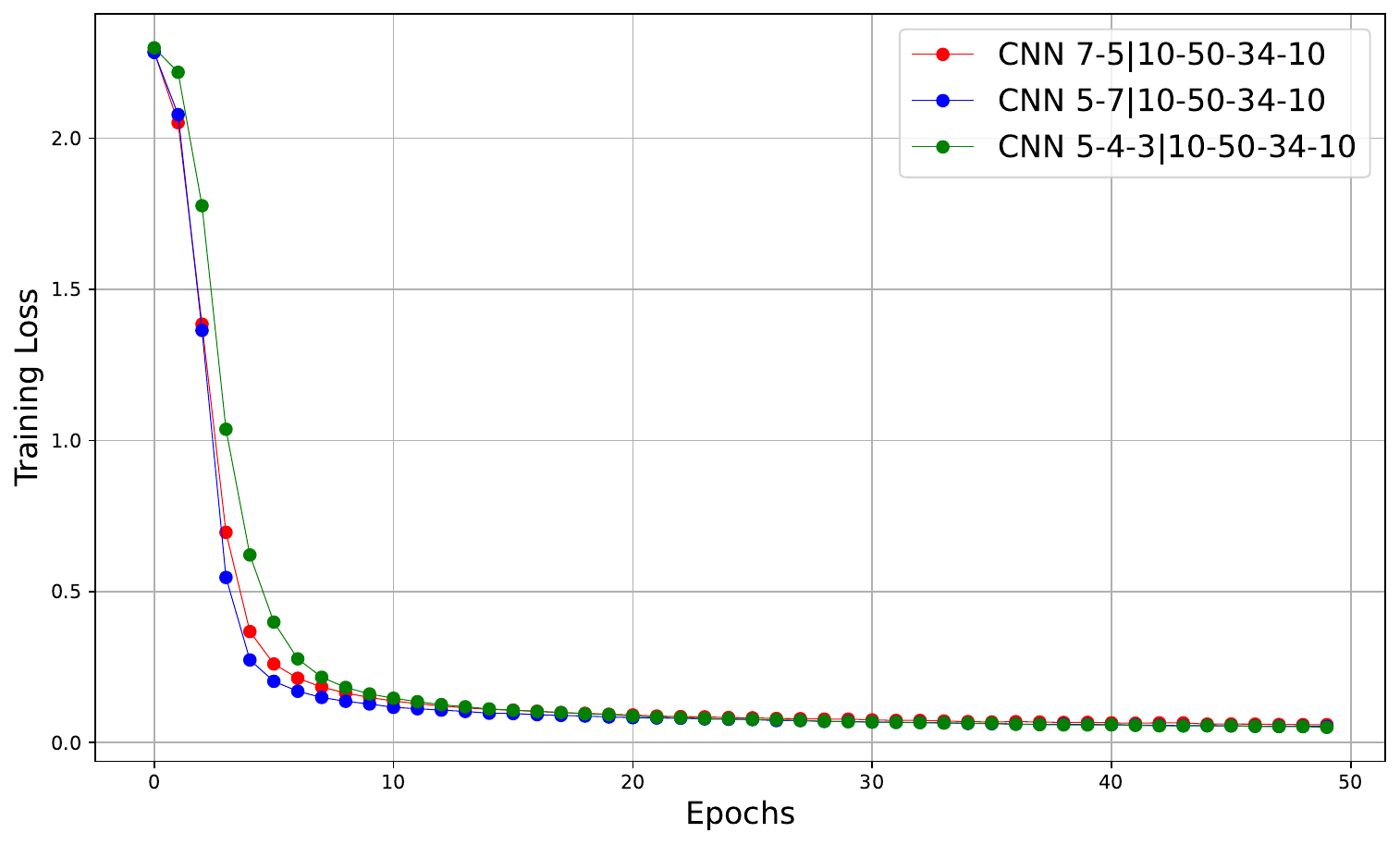}}
\caption{Training loss plots of CNNs  on MNIST using Adam  with learning rate $1e^{-3}$ and a batch size $2048$.}
\label{fig:mnist2048}
\end{center}
\vskip -0.2in
\end{figure}

\end{document}